\documentclass{article}

\usepackage{aimzo_preprint,times}

\usepackage{amsmath,amsfonts,bm}

\def\eqref#1{equation~\ref{#1}}

\def\1{\bm{1}}

\DeclareMathAlphabet{\mathsfit}{\encodingdefault}{\sfdefault}{m}{sl}
\SetMathAlphabet{\mathsfit}{bold}{\encodingdefault}{\sfdefault}{bx}{n}

\newcommand{\E}{\mathbb{E}}

\newcommand{\R}{\mathbb{R}}

\usepackage{amsmath,amssymb,mathtools}
\usepackage{amsthm}
\usepackage{booktabs}
\usepackage{array}
\usepackage{graphicx}
\usepackage{algorithm}
\usepackage{algpseudocode}
\usepackage{microtype}
\usepackage{placeins}
\usepackage{enumitem}
\usepackage{xcolor}
\usepackage{xcolor}
\definecolor{linkblue}{RGB}{70,130,180}

\usepackage[
    colorlinks=true,
    linkcolor=linkblue,
    citecolor=linkblue,
    urlcolor=linkblue
]{hyperref}
\usepackage{url}

\newcommand{\method}{\textsc{AIM-ZO}}

\newtheorem{proposition}{Proposition}
\newtheorem{lemma}{Lemma}
\newtheorem{theorem}{Theorem}
\newtheorem{assumption}{Assumption}

\title{AIM-ZO: Activation-Informed Subspace Maintenance for Zeroth-Order LLM Fine-Tuning}

\author{
Yue Xie\textsuperscript{1}\thanks{Equal contribution},
Zhi Zheng\textsuperscript{2}\textsuperscript{*},
Yunpeng Ba\textsuperscript{1},
Xuyang Wu\textsuperscript{1},
Xialiang Tong\textsuperscript{3},
Zhichao Lu\textsuperscript{4},\\
\ \textbf{Tao Zhong\textsuperscript{3},
Zhenkun Wang\textsuperscript{1}}\\
\textsuperscript{1}Southern University of Science and Technology \quad
\textsuperscript{2}National University of Singapore \\
\textsuperscript{3}Huawei Technologies Ltd. \quad
\textsuperscript{4}City University of Hong Kong \\
\texttt{wuxy6@sustech.edu.cn, zhi.zheng@u.nus.edu} \\ 
}

\iclrfinalcopy

\begin{document}

\maketitle

\begin{abstract}

Zeroth-order (ZO) optimization offers a memory-efficient alternative for LLM fine-tuning by estimating updates only from forward evaluations of perturbed parameters, without backpropagation or activation storage. However, in billion-parameter LLMs, isotropic perturbations often waste many forward evaluations on weakly informative directions. To make these evaluations more informative, existing ZO methods restrict perturbations to low-dimensional subspaces. Yet the quality of these subspaces is critical: overly compressed or poorly maintained spaces can miss useful update directions. \textbf{To obtain a high-quality subspace for ZO updates}, this paper proposes AIM-ZO, a ZO fine-tuning method based on Activation-Informed Subspace Maintenance. AIM-ZO uses forward activations as local directional information and continuously integrates them into a broad, evolving subspace over training. To access broader gradient-relevant structure while keeping individual perturbations low-dimensional, AIM-ZO activates only a smaller set of shared and sampled directions, decoupling the maintained width from the active width. We evaluate AIM-ZO across 5 LLMs and 11 downstream tasks under matched forward-evaluation budgets; its six-task average exceeds the strongest fully evaluated ZO baseline by 1.26 percentage points on OPT-2.7B and MeZO by 2.85 percentage points on OPT-30B. Our code is available at \url{https://github.com/EkkoXy/AIM-ZO}
\end{abstract}

\vspace{-8pt}
\section{Introduction}
\label{sec:introduction}

Fine-tuning large language models (LLMs) requires substantial memory because backpropagation stores intermediate activations for gradient computation. Zeroth-order (ZO) optimization avoids this cost by estimating gradients from forward loss evaluations at perturbed parameter points \citep{ghadimi2013stochastic,nesterov2017random,zheng2026agenticesopt,ba2026evolutionstrategies}. \textbf{MeZO} shows that this approach can fine-tune LLMs with a memory footprint close to inference \citep{malladi2023mezo}. However, MeZO samples perturbation directions randomly from the full parameter space. As illustrated on the left of Figure~\ref{fig:overview}, in billion-parameter models, many such random directions \textbf{are poorly aligned with useful update directions}, so a large fraction of forward evaluations provide only \textbf{weak optimization signals}.

\begin{figure}[t]
  \centering\vspace{-1.5em}
  \includegraphics[width=\textwidth]{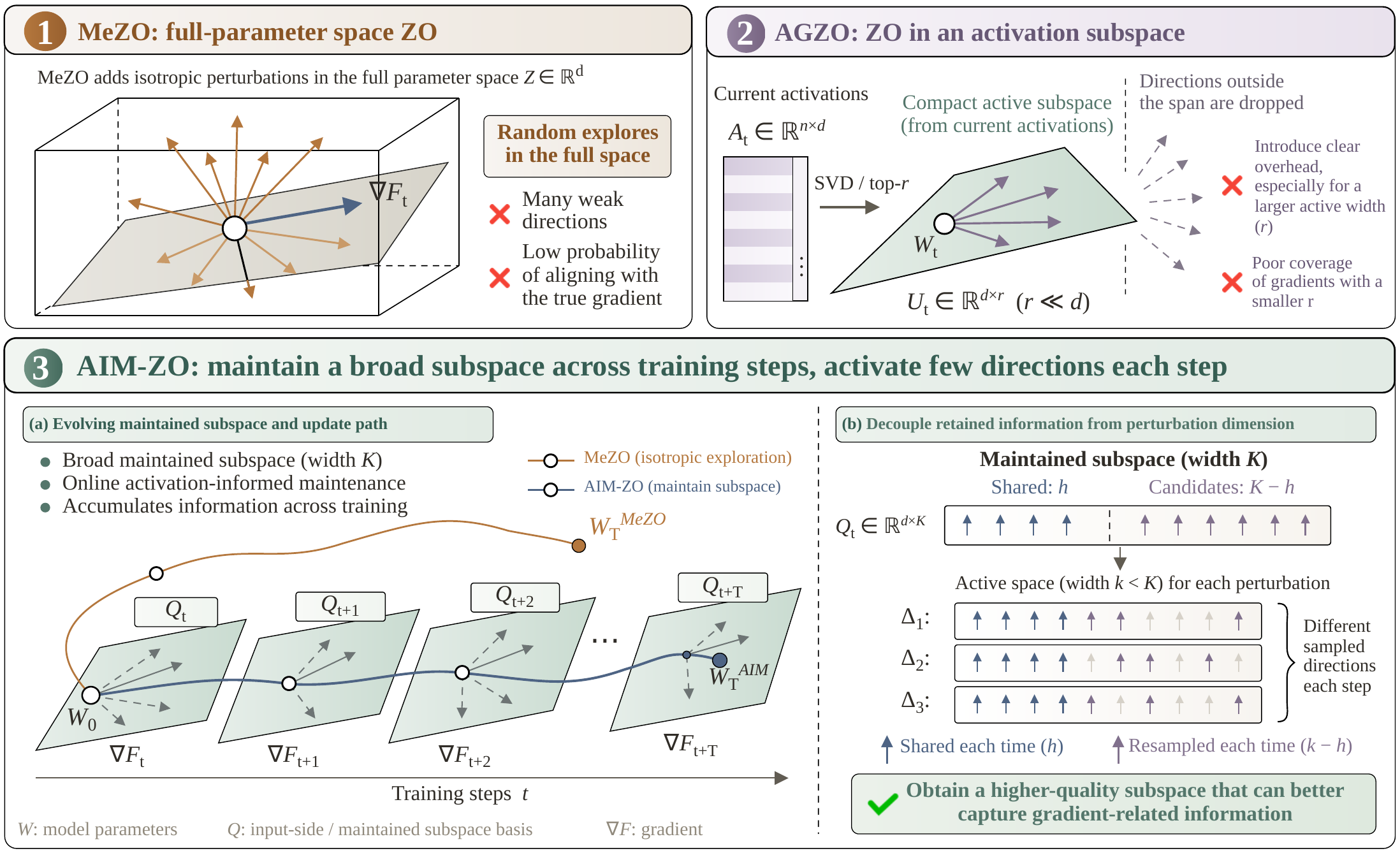}\vspace{-8pt}
  \caption{Perturbation spaces. MeZO explores the full parameter space; AGZO constructs a subspace from current activations. AIM-ZO maintains a width-$K$ subspace across training and activates a width-$k$ subspace with $h$ shared and $k-h$ sampled directions per perturbation.}
  \label{fig:overview}\vspace{-1em}
\end{figure}

Prior studies have shown that gradients can concentrate in low-dimensional subspaces that remain relatively stable over parts of training \citep{gurari2018tiny,jaiswal2025stabilization}. Building on this observation, recent ZO methods exploit low-dimensional structure through low-rank perturbations, random projections, or forward activations \citep{chen2025lozo,yu2025subzero,lin2026agzo,dong2026zoact}. Their effectiveness, however, depends critically on the quality of the perturbation subspace. Specifically, \textbf{1)} random or overly compressed perturbation subspaces may \textbf{miss useful gradient information}; and \textbf{2)} simply enlarging the perturbation subspace can capture more gradient information but also \textbf{make finite-sample estimation more difficult} as its width grows. These challenges motivate our central research question:
\begin{center}
\textit{\textbf{How can we obtain a high-quality subspace for low-dimensional ZO fine-tuning?}}
\end{center}
AGZO constructs perturbation subspaces from current activations, rebuilding them at each step without accumulating activation structure across training \citep{lin2026agzo}. This repeated construction adds computational cost and can leave the subspace sensitive to stochastic variation; using the same subspace for perturbations also couples gradient coverage to estimation variance.

To address these limitations, we propose \method{}, a ZO fine-tuning method based on \textbf{Activation-Informed Subspace Maintenance}. \method{} continuously accumulates activation information across training into a broader width-$K$ maintained subspace. Each perturbation then activates only $k<K$ directions, combining $h$ shared directions with $k-h$ directions sampled from the remaining maintained space. This decouples the amount of gradient-relevant information retained by the maintained subspace from the dimensionality used in each perturbation, allowing low-dimensional perturbations to access different parts of a broader subspace over time. A population-centered one-sided estimator aggregates these perturbations into the parameter update. We evaluate \method{} across 5 LLMs, ranging from 0.6B to 30B parameters, and 11 downstream tasks. \method{} achieves stronger overall performance than existing ZO baselines, including a 1.26 percentage-point improvement over the strongest fully evaluated baseline on OPT-2.7B and a 2.85 percentage-point improvement over MeZO on OPT-30B. Our contributions are as follows:
\begin{itemize}[leftmargin=*]
    \item We propose \method{}, which accumulates forward activation information across training to maintain a broad, evolving perturbation subspace. Our analysis gives a lower bound on its gradient capture, which we also measure empirically.
    \item We decouple maintained and active subspace widths, allowing low-dimensional perturbations to access a broader space. Our analysis identifies when estimates from the active subspace align better with the gradient than estimates from the full maintained subspace.
    \item We comprehensively evaluate \method{} across 5 LLMs and 11 downstream tasks. The results show stronger overall performance than existing ZO baselines, while peak-memory measurements on Qwen3-0.6B show only modest overhead relative to MeZO.
\end{itemize}

\section{Related Work}
\label{sec:related}

\textbf{Zeroth-Order Fine-Tuning.} ZO fine-tuning estimates parameter updates from forward loss evaluations without backpropagation. We consider $\min_w F(w)$, where $F(w)=\mathbb E_\xi[\mathcal L(w;\xi)]$ and $w$ denotes the trainable parameters. For matrix parameters, we use $w=\operatorname{vec}(W)$ and $z=\operatorname{vec}(Z)$; full-model vectors concatenate the corresponding parameter blocks. Given a perturbation $z$ and scale $\epsilon>0$, a standard two-sided estimator for ZO is as follows:
\[
  \widehat g=
  \frac{\mathcal L(w+\epsilon z;\xi)-\mathcal L(w-\epsilon z;\xi)}
       {2\epsilon}\,z.
\]
MeZO samples $z\sim\mathcal N(0,I)$ and regenerates perturbations from random seeds to avoid storing full perturbation vectors \citep{malladi2023mezo}. Subsequent methods improve how perturbations are generated or how gradient estimates are converted into updates.

\textbf{Curvature-Guided Scaling and Coordinate Selection.} These methods adjust perturbations according to coordinate-level information. HiZOO uses $z=\Sigma^{1/2}u$, where $u\sim\mathcal N(0,I)$ and $\Sigma$ is a diagonal inverse-Hessian approximation estimated from forward evaluations; its factored variant HiZOO-L reduces the memory required for this state \citep{zhao2025hizoo}. CurvZO constructs sparse perturbations $z=m\odot u$, sampling the mask $m$ using accumulated curvature-proxy scores and correcting nonuniform sampling through probability reweighting \citep{wang2026curvzo}. Sparse MeZO instead uses parameter magnitudes to select a subset of coordinates for fine-tuning \citep{liu2025sparsemezo}.

\textbf{Structured Perturbations and Random Subspaces.} Another approach reduces the estimation or computational burden through low-rank structure and subspace restriction. LoZO constructs $Z=UV^\top$, reusing the random right factor $V$ across multiple steps while resampling $U$ \citep{chen2025lozo}. SubZero uses $Z=URV^\top$, with periodically refreshed random orthonormal bases $U,V$ and fresh coefficients $R$ \citep{yu2025subzero}. ZO-Muon uses $Z=QA$ with a random orthonormal basis $Q$, then applies Muon-style orthogonalization to the reduced-space estimate before mapping it back to the parameter space \citep{lang2026zomuon}. These methods construct and reuse structured perturbation spaces, but their random bases do not explicitly target gradient-relevant directions.

\textbf{Activation-Informed Subspaces.} AGZO extracts a compact perturbation subspace from each minibatch's activations using power iteration \citep{lin2026agzo}, while ZO-Act reuses a subspace derived from initial activations \citep{dong2026zoact}. AGZO adapts to current activations, but repeated extraction incurs computational cost and does not retain activation structure observed at earlier training steps. ZO-Act avoids repeated extraction, but its fixed subspace cannot track changes in model representations. Both methods restrict perturbations to a truncated activation span, which excludes gradient components outside that span. Widening the span can improve coverage, but also increases basis storage and, when fully activated, estimation dimension.

\method{} accumulates activation information across steps in a width-$K$ maintained subspace, while each perturbation activates only $k<K$ directions: $h$ shared directions and $k-h$ sampled directions. Directions omitted from one perturbation remain available to others, so the maintained subspace can cover more candidate directions without increasing the dimension of every estimate. Additional related work and a design comparison appear in Appendix~\ref{app:additional-related}.

\section{Methodology: AIM-ZO}
\label{sec:method}

\method{} maintains a broad subspace using forward activations accumulated across training. At each step, it selects a smaller active subspace for each perturbation and aggregates the one-sided population evaluations into a parameter update.

\subsection{Notation and setup}
\label{sec:problem-notation}

Let $\mathcal W=\{W_\ell\}_{\ell=1}^{L}$ denote the trainable matrices, with $W_\ell\in\mathbb{R}^{p_\ell\times d_\ell}$ for layer $\ell$. At iteration $t$, we draw a minibatch $\xi_t$ and write $G_{\ell,t}=\nabla_{W_\ell}\mathcal L(\mathcal W_t;\xi_t)$ for the layer gradient used in the analysis. A perturbation population comprises $N$ perturbations whose associated perturbed parameter points are evaluated on the same minibatch to estimate an update.

Throughout, $\ell$, $t$, and $i$ index layers, training iterations, and population members, respectively. For matrices of the same shape, write $\langle A,B\rangle_F=\operatorname{tr}(A^\top B)$. We use $\|A\|_F=\sqrt{\langle A,A\rangle_F}$ for the Frobenius norm and $\|A\|_2$ for the spectral norm. For any column-orthonormal basis $Q$, $P_Q=QQ^\top$ denotes the orthogonal projector onto its column space.

\subsection{Activation-informed subspace maintenance}
\label{sec:space-maintenance}

At iteration $t$, let $H_{\ell,t}\in\mathbb R^{n_{\ell,t}\times d_\ell}$ be the input activations of a linear layer, where $n_{\ell,t}$ counts the activation rows and $Y_{\ell,t}=H_{\ell,t}W_{\ell,t}^\top$. The weight gradient satisfies
\begin{equation}
  G_{\ell,t}=E_{\ell,t}^\top H_{\ell,t},
  \label{eq:activation-gradient}
\end{equation}
where $E_{\ell,t}$ is the loss derivative with respect to $Y_{\ell,t}$. Thus, the row space of $G_{\ell,t}$ lies in the row space of $H_{\ell,t}$ \citep{lin2026agzo}. We maintain an orthonormal basis $Q_{\ell,t}^{\mathrm{wide}}\in\mathbb R^{d_\ell\times K}$ to track prominent activation directions across iterations.

An unperturbed centre pass updates the stored basis by Oja's rule \citep{huang2021streaming}:
\begin{equation}
  Q_{\ell,t}^{\mathrm{wide}}=\operatorname{qf}\!\left(\widetilde Q_{\ell,t}+\eta_q n_{\ell,t}^{-1}H_{\ell,t}^{\top}(H_{\ell,t}\widetilde Q_{\ell,t})\right),
  \label{eq:oja}
\end{equation}
Here $\eta_q$ is the Oja step size, and $\operatorname{qf}$ returns a thin, unpivoted QR basis with fixed signs. A Gaussian matrix initializes the basis. One update serves all $N$ perturbations, after which we store $\widetilde Q_{\ell,t+1}=Q_{\ell,t}^{\mathrm{wide}}$. Appendix~\ref{app:oja-current-comparison} compares gradient capture and construction time with AGZO's current-batch reconstruction \citep{lin2026agzo}; Appendix~\ref{app:maintenance-frequency} reports an online maintenance-frequency ablation.

\begin{algorithm}[t]
  \caption{One training run of \method{}}
  \label{alg:method}
  \begin{algorithmic}[1]
    \Require Iterations $T$; weights $\mathcal W_0$; bases $\{\widetilde Q_{\ell,0}\}$; widths $K,h,k$ with $0\le h\le k\le K$; population size $N\ge2$; schedules $\{\epsilon_t\},\{\eta_t\}$; Oja step size $\eta_q$
    \For{$t=0,\ldots,T-1$}
      \State Sample $\xi_t$; run a centre forward pass to collect $\{H_{\ell,t}\}$
      \State For each layer $\ell$, update $Q_{\ell,t}^{\mathrm{wide}}$ by Equation~\ref{eq:oja}
      \For{$i=1,\ldots,N$}
        \State For each layer $\ell$, sample $S_{\ell,t,i}$, $a_{\ell,t,i}$, and $b_{\ell,t,i}$; form $Q_{\ell,t,i}^{\mathrm{act}}$ and $Z_{\ell,t,i}$ by Equations~\ref{eq:active}--\ref{eq:probe}
        \State Evaluate $y_{t,i}^{+}=\mathcal{L}(\mathcal W_t+\epsilon_t\mathcal Z_{t,i};\xi_t)$ with $\mathcal Z_{t,i}=\{Z_{\ell,t,i}\}_{\ell=1}^{L}$
      \EndFor
      \State For each $\ell$, compute $\widehat G_{\ell,t}$ by Equation~\ref{eq:rloo-estimator} and set $W_{\ell,t+1}=W_{\ell,t}-\eta_t\widehat G_{\ell,t}$
      \State Store $\widetilde Q_{\ell,t+1}\gets Q_{\ell,t}^{\mathrm{wide}}$ for all $\ell$
    \EndFor
    \State \Return $\mathcal W_T$
  \end{algorithmic}
\end{algorithm}

\subsection{Active-subspace selection and parameter updates}
\label{sec:queries-update}

From the width-$K$ subspace maintained across iterations, \method{} selects a width-$k$ active subspace for each perturbation. Different perturbations can thus access different parts of the maintained subspace while keeping each estimate low-dimensional. An online comparison with a shared active subspace is in Appendix~\ref{app:active-resampling}.

For the layer-wise expressions below, we fix an iteration and suppress the layer and iteration indices. For each perturbation, we form an active subspace from the same first $h$ shared basis directions $Q_h=Q^{\mathrm{wide}}[:,1\!:\!h]$ and $s=k-h$ columns sampled from the remaining pool, with $0\le h\le k\le K$ and $k\ge1$. Specifically, perturbation $i$ uses
\begin{equation}
  Q_i^{\mathrm{act}}
  =\left[Q_h,Q^{\mathrm{wide}}[:,S_i]\right],
  \qquad |S_i|=s,
  \label{eq:active}
\end{equation}
where $Q_i^{\mathrm{act}}$ is a basis of the active subspace and $S_i$ is sampled uniformly without replacement from $\{h+1,\ldots,K\}$, independently across population members and layers. The shared columns follow the stored basis order.

Following the low-rank perturbation construction of LoZO \citep{chen2025lozo}, we independently draw $a_i\sim\mathcal N(0,I_p)$ and $b_i\sim\mathcal N(0,I_k)$ and form
\begin{equation}
  U_i=a_i b_i^\top (Q_i^{\mathrm{act}})^\top,\qquad
  \alpha_i=\frac{\sqrt{pd}}{\|U_i\|_F+10^{-12}},\qquad
  Z_i=\alpha_i U_i.
  \label{eq:probe}
\end{equation}
This normalization matches the layer-wise perturbation magnitude to the root-mean-square Frobenius norm of MeZO's isotropic Gaussian perturbations.

For $N\ge2$, each population member perturbs all layers on the same minibatch: $\mathcal Z_i=\{Z_{\ell,i}\}_{\ell=1}^{L}$, $y_i^{+}=\mathcal L(\mathcal W+\epsilon\mathcal Z_i;\xi)$, and $\bar y^{+}=N^{-1}\sum_{i=1}^{N}y_i^{+}$. For each layer, the REINFORCE leave-one-out (RLOO) estimator is
\begin{equation}
  \widehat G
  =\frac1{(N-1)\epsilon}
    \sum_{i=1}^{N}(y_i^{+}-\bar y^{+})Z_i.
  \label{eq:rloo-estimator}
\end{equation}
Each iteration uses one unperturbed and $N$ perturbed forward passes, followed by a parameter update with learning rate $\eta$ (Algorithm~\ref{alg:method}). Further ablations of population size and perturbation coefficient rank are in Appendix~\ref{app:component-ablations}.

\section{Analysis of AIM-ZO Components}
\label{sec:theory}
This section characterizes gradient capture by the activation-maintained subspace and derives a condition under which decoupling maintained and active widths improves directional alignment. It then examines one-sided and two-sided estimation at a matched evaluation budget. Proofs are provided in Appendices~\ref{app:tracking}--\ref{app:usefulness}.

The analysis uses the notation of Section~\ref{sec:problem-notation} and suppresses the layer index: $W\in\R^{p\times d}$ denotes one weight matrix, $f$ its minibatch loss with other weights fixed, and $G=\nabla f(W)$.

\subsection{Maintaining gradient-relevant information}
\paragraph{Gradient structure and capture.} Let $Q_t:=Q_t^{\mathrm{wide}}$ denote the maintained basis. Define $S_t=H_t^\top H_t/n_t$ and $M_t=\E[S_t\mid\mathcal F_{t-1}]$, where $\mathcal F_{t-1}$ is the history before sampling the current batch. Let $V_t^\star$ span the top-$K$ eigenspace of $M_t$. The tracking error between the maintained and target subspaces is
\[
\varepsilon_{{\rm trk},t}:=\|P_{Q_t}-P_{V_t^\star}\|_2.
\]

Let $\mathcal I_t$ index a sliding window of $T_{\rm w}$ training steps with nonzero minibatch gradients $G_\tau$. All gradients in this window are compared against the same $V_t^\star$ and $Q_t$. Prior observations of concentrated and locally stable gradient directions \citep{gurari2018tiny,jaiswal2025stabilization} motivate the following shared-subspace assumption.
\begin{assumption}[Local shared gradient structure]
\label{ass:theory-structure}
For each window $\mathcal I_t$, there exists a shared column-orthonormal basis $U_t^\star\in\R^{d\times r}$, with $1\le r\le K$, such that
\begin{equation}
 \left[\frac1{T_{\rm w}}\sum_{\tau\in\mathcal I_t}
 \frac{\|G_\tau(I-P_{U_t^\star})\|_F^2}{\|G_\tau\|_F^2}\right]^{1/2}
 \le\varepsilon_{{\rm sub},t}.
 \label{eq:theory-structural}
\end{equation}
\end{assumption}
The activation target must also cover the shared gradient subspace.
\begin{assumption}[Activation--gradient alignment]
\label{ass:theory-alignment}
The width-$K$ population activation subspace covers the gradient energy within the shared rank-$r$ space spanned by the same $U_t^\star$ from Assumption~\ref{ass:theory-structure}, up to residual $\varepsilon_{{\rm act},t}$:
\begin{equation}
 \left[\frac1{T_{\rm w}}\sum_{\tau\in\mathcal I_t}
 \frac{\|G_\tau P_{U_t^\star}(I-P_{V_t^\star})\|_F^2}
      {\|G_\tau\|_F^2}\right]^{1/2}
 \le\varepsilon_{{\rm act},t}.
 \label{eq:theory-alignment}
\end{equation}
\end{assumption}

Figure~\ref{fig:structure-diagnostics}(a,b) reports diagnostics for these two residuals: the shared-gradient residual is smaller in middle and late layers, and the activation-alignment residual decreases with $K$.

\paragraph{Tracking the activation subspace.} For the Oja update in Equation~\ref{eq:oja}, define target drift as $\omega_t=\|P_{V_t^\star}-P_{V_{t-1}^\star}\|_F/\sqrt2$ and set $t=0$ after burn-in.
\begin{assumption}[Burn-in initialization]
\label{ass:theory-burn-in}
For $0\le\delta_{\rm init}<1$, the initial error $\Phi_0:=\tfrac12\|P_{Q_0}-P_{V_0^\star}\|_F^2$ is at most $\varepsilon_{\rm burn}^2$ with probability at least $1-\delta_{\rm init}$.
\end{assumption}
Stationary Oja results support this initialization condition under their sampling and step-size requirements \citep{huang2021streaming}. Post-burn-in tracking further requires the following conditions.
\begin{assumption}[Post-burn-in tracking conditions]
\label{ass:theory-tracking}
For $1\le t\le T$, the following conditions hold almost surely: (i) \emph{spectral separation}, $\lambda_K(M_t)-\lambda_{K+1}(M_t)\ge\gamma>0$; (ii) \emph{bounded observations}, $\|S_t\|_2\le\Lambda$ and $\E[\|S_t\|_2^2\mid\mathcal F_{t-1}]\le\Sigma^2$; and (iii) \emph{bounded target drift}, $\omega_t\le\bar\omega$. Here $\lambda_j(M_t)$ is the $j$th largest eigenvalue of $M_t$.
\end{assumption}

\begin{theorem}[Post-burn-in tracking]
\label{thm:theory-track}
Suppose Assumptions~\ref{ass:theory-burn-in} and~\ref{ass:theory-tracking} and the step-size, drift, noise, and burn-in conditions in Appendix~\ref{app:tracking} hold. Then, for $\eta_q>0$, tracking-bound failure probability $0<\delta<1$, and $t\le T$,
\begin{equation}
\E\varepsilon_{{\rm trk},t}^2
\le \underbrace{O\!\left(e^{-\eta_q\gamma t/2}\varepsilon_{\rm burn}^2\right)}_{\text{initialization}}
+\underbrace{O\!\left(\frac{\bar\omega^2}{\eta_q^2\gamma^2}\right)}_{\text{tracking lag}}
+\underbrace{O\!\left(\frac{K\eta_q\Sigma^2}{\gamma}\right)}_{\text{stochastic error}}
+\underbrace{O\!\left(K(\delta+\delta_{\rm init})\right)}_{\text{failure events}}.
\label{eq:theory-track-bound}
\end{equation}
\end{theorem}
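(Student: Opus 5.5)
We track the potential $\Phi_t:=\tfrac12\|P_{Q_t}-P_{V_t^\star}\|_F^2$. Since $\varepsilon_{{\rm trk},t}^2=\|P_{Q_t}-P_{V_t^\star}\|_2^2\le\|P_{Q_t}-P_{V_t^\star}\|_F^2=2\Phi_t$, it suffices to bound $\E\Phi_t$. We also use that $\Phi_t=K-\operatorname{tr}(P_{Q_t}P_{V_t^\star})=\|\sin\Theta(Q_t,V_t^\star)\|_F^2\le K$.

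Each step is split into a drift move $V_{t-1}^\star\to V_t^\star$ and an Oja move $\widetilde Q_t=Q_{t-1}\to Q_t$. For the drift move, the triangle inequality in Frobenius norm gives
\[
\sqrt{\Phi(Q_{t-1},V_t^\star)}\le\sqrt{\Phi_{t-1}}+\omega_t .
\]
Combined with Young's inequality with parameter $c\,\eta_q\gamma$, this yields
\[
\Phi(Q_{t-1},V_t^\star)\le(1+c\eta_q\gamma)\Phi_{t-1}+\Bigl(1+\tfrac{1}{c\eta_q\gamma}\Bigr)\bar\omega^2 .
\]

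For the Oja move against a fixed target $V_t^\star$, write $Q_t=\operatorname{qf}\bigl((I+\eta_q S_t)Q_{t-1}\bigr)$. Following the standard stochastic Oja analysis (as in \citet{huang2021streaming}), we expand $\operatorname{tr}(P_{Q_t}P_{V_t^\star})$ to second order in $\eta_q$, using $(A^\top A)^{-1}$ with $A=(I+\eta_q S_t)Q_{t-1}$. Taking the conditional expectation $\E[\cdot\mid\mathcal F_{t-1}]$, which replaces $S_t$ by $M_t$ in the first-order term, the eigengap (i) gives the contraction
\[
\E[\Phi_t\mid\mathcal F_{t-1}]\le(1-\eta_q\gamma)\,\Phi(Q_{t-1},V_t^\star)+C\eta_q^2 K\Sigma^2 .
\]
Two conditions control this step. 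The bound $\|S_t\|_2\le\Lambda$ together with $\eta_q\Lambda\le 1/2$ (a step-size condition) keeps the higher-order remainder dominated. The contraction coefficient is valid only in a "good region" $\Phi_{t-1}\le\varepsilon_{\rm good}^2<1$, where the principal angles stay bounded away from $\pi/2$ so that $\operatorname{tr}(P_{V^\star}P_Q)$ gives a curvature factor $\ge 1-\Phi$.

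Composing the two moves with $c=1/2$ gives the recursion
\[
\E[\Phi_t\mid\mathcal F_{t-1}]\le(1-\eta_q\gamma/2)\Phi_{t-1}+O\!\left(\frac{\bar\omega^2}{\eta_q\gamma}\right)+O(\eta_q^2K\Sigma^2)
\]
on the good event. Unrolling it and summing the geometric series with ratio $1-\eta_q\gamma/2$ produces the first three terms: $e^{-\eta_q\gamma t/2}\varepsilon_{\rm burn}^2$, $\bar\omega^2/(\eta_q^2\gamma^2)$, and $K\eta_q\Sigma^2/\gamma$. Assumption~\ref{ass:theory-burn-in} supplies $\Phi_0\le\varepsilon_{\rm burn}^2$.

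The main obstacle is making the good-region requirement hold uniformly in time. We would use a stopping time $\tau=\min\{t:\Phi_t>\varepsilon_{\rm good}^2\}$ and apply the recursion to the stopped process $\Phi_{t\wedge\tau}$. We then need $\Pr(\tau\le T)\le\delta$. For this we would form a supermartingale from $\Phi_{t\wedge\tau}$ minus its deterministic steady-state level and apply Freedman's inequality. The conditional increments are bounded via $\Lambda$ and their conditional variance via $\Sigma^2$. The appendix's drift, noise and burn-in conditions are exactly what make the steady-state level plus deviation stay below $\varepsilon_{\rm good}^2$ with probability $1-\delta$.

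Finally, we split $\E\Phi_t=\E[\Phi_t\1_{\text{good}}]+\E[\Phi_t\1_{\text{bad}}]$. Since $\Phi_t\le K$ always, the bad event (burn-in failure or early exit) contributes at most $K(\delta+\delta_{\rm init})$, which is the final term in \eqref{eq:theory-track-bound}.
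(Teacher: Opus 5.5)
Your architecture matches the paper's:
\begin{itemize}
\item the same potential $\Phi_t$;
\item the triangle-inequality drift step followed by a Young-type split;
\item a second-order expansion of the QR--Oja step, whose first-order term contracts through the eigengap with curvature factor $1-u$ (the paper's $\|Y\|_F^2\ge u_t(1-u_t)_+$);
\item a remainder of order $K\eta_q^2\|S_t\|_2^2$;
\item a final split using $\Phi_t\le K$ on the bad events.
\end{itemize}
Your looser choices only cost constants: $\varepsilon_{{\rm trk},t}^2\le2\Phi_t$, the threshold $\eta_q\Lambda\le1/2$, and a good region stated on $\Phi_{t-1}$ rather than on $u_t=\Phi(Q_{t-1},V_t^\star)\le1/2$.

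The gap is in the step you flag as the main obstacle. Write your recursion as $\E[\Phi_t\mid\mathcal F_{t-1}]\le\kappa\Phi_{t-1}+c_0$, with $\kappa=1-\eta_q\gamma/2$ and steady state $B=c_0/(1-\kappa)$. Then $\Phi_{t\wedge\tau}-B$ is not a supermartingale, because $\kappa(\Phi_{t-1}-B)\le\Phi_{t-1}-B$ fails whenever $\Phi_{t-1}<B$. Suppose you still apply Freedman to the undiscounted martingale part $\sum_{s\le t}\xi_s$. Its predictable variance is of order $K^2\eta_q^2\Sigma^2t$, so the deviation scales like $K\eta_q\Sigma\sqrt{t\log(T/\delta)}$. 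The budget in Appendix~\ref{app:tracking} only pays, through $b_T(\delta)$, for $K\sqrt{\eta_q\Sigma^2\gamma^{-1}\log(T/\delta)}$ plus a $\Lambda$ term. That is smaller by a factor of order $\sqrt{\eta_q\gamma t}$. So for $T\gg1/(\eta_q\gamma)$ your argument does not keep the trajectory in the good region under the stated conditions. Your claim that those conditions are ``exactly'' what is needed is therefore not justified for the martingale you describe.

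The missing idea is geometric discounting of the noise. The paper keeps the one-step bound pathwise:
\[
\Phi_t\le\kappa\Phi_{t-1}+\frac{\bar\omega^2}{\eta_q a}+16K\eta_q^2\|S_t\|_2^2+\zeta_t,
\]
with explicit first-order noise $\zeta_t$ and $a=\gamma/2$. Using $\|S_s\|_2\le\Lambda$, it unrolls this to
\[
\Phi_t\le\kappa^t\Phi_0+(1-\kappa^t)B_{\max}+\sum_{s\le t}\kappa^{t-s}\zeta_s .
\]
For each fixed $t$, it applies Freedman to this weighted sum, whose predictable variance is at most $4K^2\eta_q\Sigma^2/a$ uniformly in $t$, and then takes a union bound over $t\le T$. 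A deterministic induction then keeps $u_s<1/2$ on the good event.

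There are two alternative repairs. You can work with the rescaled supermartingale $\kappa^{-t}(\Phi_t-B)$, which produces the same weighted sums. Or you can use an excursion argument that exploits the strictly negative drift above a level exceeding $B$, with a union bound over excursion start times. With any of these in place, your stopped-process expectation bound (or the paper's killed process $\Phi_t\mathbf{1}\{\tau>t\}$) yields the theorem.
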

A larger eigengap $\gamma$ reduces tracking lag and stochastic error, while larger drift $\bar\omega$ or observation second moment $\Sigma^2$ increases them. Within the admissible range, increasing $\eta_q$ speeds initialization decay and reduces lag but increases stochastic error; the last two terms also scale with $K$. Together with the structural and alignment residuals, this bound gives the following capture guarantee.
\begin{proposition}[Maintained-subspace capture]
\label{prop:theory-capture}
Under Assumptions~\ref{ass:theory-structure}--\ref{ass:theory-alignment},
\begin{equation}
\frac1{T_{\rm w}}\sum_{\tau\in\mathcal I_t}
\frac{\|G_\tau P_{Q_t}\|_F^2}{\|G_\tau\|_F^2}
\ge 1-(\varepsilon_{{\rm sub},t}
+\varepsilon_{{\rm act},t}+\varepsilon_{{\rm trk},t})^2.
\label{eq:theory-capture}
\end{equation}
\end{proposition}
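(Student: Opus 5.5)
Since $P_{Q_t}$ is an orthogonal projector, the Pythagorean identity gives $\|G_\tau P_{Q_t}\|_F^2=\|G_\tau\|_F^2-\|G_\tau(I-P_{Q_t})\|_F^2$. The claimed bound in Proposition~\ref{prop:theory-capture} is therefore equivalent to
\[
\Bigl[\frac1{T_{\rm w}}\sum_{\tau\in\mathcal I_t}\frac{\|G_\tau(I-P_{Q_t})\|_F^2}{\|G_\tau\|_F^2}\Bigr]^{1/2}\le \varepsilon_{{\rm sub},t}+\varepsilon_{{\rm act},t}+\varepsilon_{{\rm trk},t}.
\]
To prove this, I will work in the weighted space of tuples $(X_\tau)_{\tau\in\mathcal I_t}$ of $p\times d$ matrices with norm $\|X\|:=[T_{\rm w}^{-1}\sum_\tau\|X_\tau\|_F^2/\|G_\tau\|_F^2]^{1/2}$. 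This is a genuine (Hilbert) norm because the window contains only nonzero gradients. Minkowski's inequality in this norm is what allows the three root-mean-square residuals to be added.

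The key step is to split each residual into three terms by projecting in stages. Write $I-P_{Q_t}=(I-P_U)+P_U(I-P_V)+P_UP_V-P_{Q_t}$ with $U=U_t^\star$ and $V=V_t^\star$. Since $G_\tau(I-P_{Q_t})=G_\tau(I-P_{Q_t})(I-P_{Q_t})$, I will instead use
\[
G_\tau(I-P_{Q_t})=\bigl[G_\tau(I-P_U)+G_\tau P_U(I-P_V)+G_\tau P_UP_V\bigr](I-P_{Q_t}).
\]
Right multiplication by $I-P_{Q_t}$ is a contraction in Frobenius norm. The first two pieces therefore have norms at most $\|G_\tau(I-P_U)\|_F$ and $\|G_\tau P_U(I-P_V)\|_F$, whose RMS values are bounded by Assumptions~\ref{ass:theory-structure} and~\ref{ass:theory-alignment}.

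The third piece needs more care. Because $P_V(I-P_{Q_t})=(P_V-P_{Q_t})(I-P_{Q_t})$, we get
\[
\|G_\tau P_UP_V(I-P_{Q_t})\|_F\le \|G_\tau P_U P_V\|_F\,\|P_V-P_{Q_t}\|_2\le \|G_\tau\|_F\,\varepsilon_{{\rm trk},t}.
\]
Its normalized RMS over the window is thus at most $\varepsilon_{{\rm trk},t}$, since all $\tau$ in the window share the same $Q_t$ and $V_t^\star$. Applying Minkowski to the three tuples gives the displayed inequality. Squaring and substituting into the Pythagorean identity then yields the claim.

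The main subtlety I expect is the tracking term. One has to insert $(I-P_{Q_t})$ on the right before bounding, so that $P_V(I-P_{Q_t})$ becomes a difference of projectors measured in spectral norm. The bound also uses spectral norm and not Frobenius norm, which avoids a $\sqrt K$ factor. Everything else is the triangle inequality in the weighted $\ell^2$ norm. If $\varepsilon_{{\rm sub}}+\varepsilon_{{\rm act}}+\varepsilon_{{\rm trk}}>1$, the bound is trivial, so no extra case analysis is needed.
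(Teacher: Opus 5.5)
Your proof is correct and is essentially the paper's argument. It uses the same three-way split into structural, alignment and tracking residuals, the triangle inequality in the $1/T_{\rm w}$-weighted product Hilbert space over $\mathcal I_t$, and the Pythagorean identity for $P_{Q_t}$. The paper telescopes more directly as $G(I-P_{Q_t})=G(I-P_{U_t^\star})(I-P_{V_t^\star})+GP_{U_t^\star}(I-P_{V_t^\star})+G(P_{V_t^\star}-P_{Q_t})$, so the tracking piece is bounded by $\|G\|_F\varepsilon_{{\rm trk},t}$ immediately, and the right-insertion of $I-P_{Q_t}$ that you flag as the main subtlety is not actually needed.
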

Figure~\ref{fig:structure-diagnostics}(c) shows 52.2\%--95.1\% gradient capture by the maintained subspace.

\begin{figure}[tbp]
  \centering
  \includegraphics[width=\textwidth]{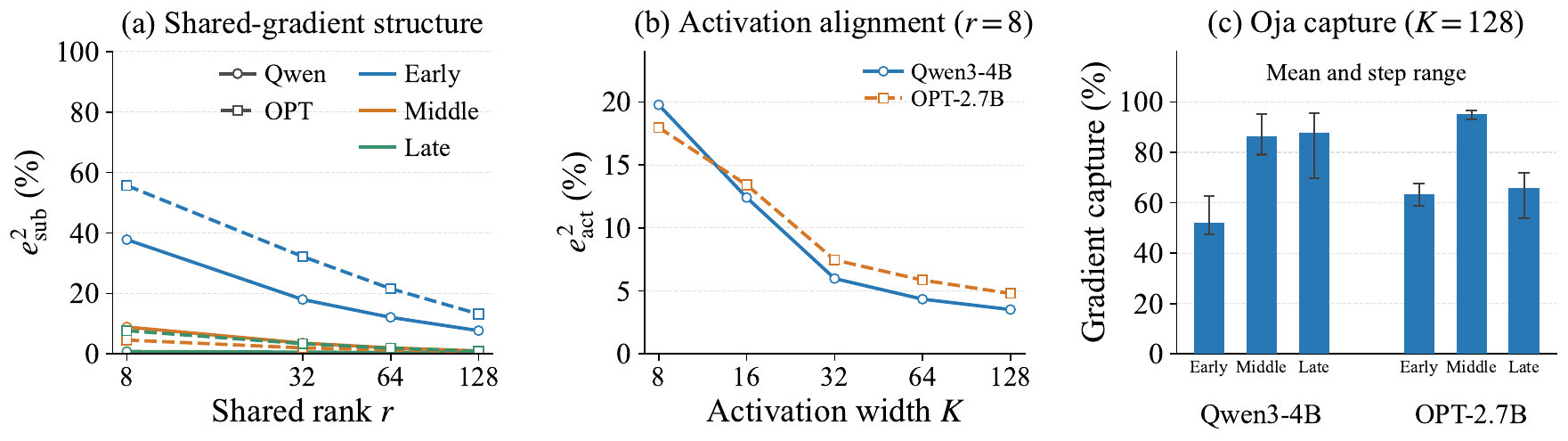}
\caption{RTE diagnostics over 16 consecutive training steps using exact training-set mean gradients. (a) Held-out shared-gradient residual. (b) Activation-alignment residual ($r=8$, averaged over three layers). (c) Maintained-subspace capture ($K=128$); bars show means and whiskers min--max ranges. Protocols are in Appendix~\ref{app:structural-diagnostics}.}
  \label{fig:structure-diagnostics}
\end{figure}

\subsection{Decoupling maintained and active widths}
\label{sec:theory-active}
A wider maintained subspace can improve gradient capture, but using all $K$ basis directions in each estimate increases variance. Fix $G\ne0$ and an ordered orthonormal maintained basis $Q\in\R^{d\times K}$. As in Section~\ref{sec:queries-update}, $Q_S$ uses $h=\theta k\in\mathbb Z$ shared columns and $k-h$ sampled columns, where $0<\theta<1$ and $1\le k\le K$. Define
\[
C_Q=\frac{\|GP_Q\|_F^2}{\|G\|_F^2}>0,\qquad
\tau_h=\frac{\|GP_{Q_h}\|_F^2}{\|GP_Q\|_F^2},\qquad
\rho=\frac{k-h}{K-h}.
\]
For standard Gaussian $R\in\R^{p\times k}$, let $Z=RQ_S^\top$ and $\widehat G_{k,0}=\langle G,Z\rangle_F Z$. Applying the Gaussian subspace moments of \citet[Theorem~2]{yu2025subzero} conditional on $Q_S$ gives
\begin{equation}
\E_R\|\widehat G_{k,0}-G\|_F^2
=\underbrace{(pk+1)\|GP_{Q_S}\|_F^2}_{\text{estimation variance}}
+\underbrace{\|G(I-P_{Q_S})\|_F^2}_{\text{squared projection bias}}.
\label{eq:theory-active-mse}
\end{equation}
Full activation retains all maintained signal at variance factor $pK+1$, while increasing $K$ at fixed $h,k$ lowers tail inclusion $\rho$. The sampling rule and fixed-subspace cosine identity of \citet[Theorem~5.4]{lin2026agzo} yield the following result.

\begin{proposition}[Active capture and directional alignment]
\label{prop:theory-active}
Let $A_{K,k}$ denote the expected fraction of maintained gradient energy retained by the active subspace, $J_{K,k}:=\E\cos(\widehat G_{k,0},G)$, and $\beta_D:=\Gamma(D/2)/[\sqrt\pi\Gamma((D+1)/2)]$, with $\cos(0,G)=0$. Then
\begin{equation}
\frac{\E_S\|GP_{Q_S}\|_F^2}{\|G\|_F^2}=C_Q A_{K,k},\qquad A_{K,k}=\tau_h+\rho(1-\tau_h).
\label{eq:theory-active-result}
\end{equation}
For $k<K$, if $A_{K,k}>\beta_{pK}/\beta_{pk}$, then $J_{K,k}>J_{K,K}$: partial activation has higher expected cosine with $G$ than full activation.
\end{proposition}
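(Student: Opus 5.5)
The proof has two parts: an exact computation of expected active capture, and a comparison of expected cosines that reduces to that capture.

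\textbf{Capture identity.} Write $\|GP_{Q_S}\|_F^2=\sum_{j\in[h]\cup S}\|Gq_j\|^2$, where $q_j$ are the orthonormal columns of $Q$. This holds because $P_{Q_S}=\sum_{j}q_jq_j^\top$ over the active columns. Taking $\E_S$ with $S$ uniform without replacement from $\{h+1,\dots,K\}$, each tail index $j$ is included with probability $s/(K-h)=\rho$. Hence
\[
\E_S\|GP_{Q_S}\|_F^2=\|GP_{Q_h}\|_F^2+\rho\bigl(\|GP_Q\|_F^2-\|GP_{Q_h}\|_F^2\bigr).
\]
Dividing by $\|G\|_F^2$ and using the definitions of $\tau_h$ and $C_Q$ gives $C_QA_{K,k}$ with $A_{K,k}=\tau_h+\rho(1-\tau_h)$. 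For $k=K$ we have $\rho=1$, so $A_{K,K}=1$.

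\textbf{Fixed-subspace cosine.} Condition on $Q_S$ and set $R\in\R^{p\times k}$ standard Gaussian, $Z=RQ_S^\top$. Then
\[
\langle G,Z\rangle_F=\langle GQ_S,R\rangle_F,\qquad \|Z\|_F=\|R\|_F .
\]
If $\langle G,Z\rangle\neq0$, $\cos(\widehat G_{k,0},G)=\operatorname{sign}(\langle G,Z\rangle)\langle G,Z\rangle/(\|Z\|_F\|G\|_F)=|\langle GQ_S,R\rangle|/(\|R\|_F\|G\|_F)$. The cosine-zero convention covers the null event, which has probability zero when $GQ_S\ne0$. Let $u=\operatorname{vec}(R)/\|R\|_F$, which is uniform on $S^{D-1}$ with $D=pk$. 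Then
\[
\E|\langle v,u\rangle|=\|v\|\,\E|u_1|=\|v\|\,\beta_D,
\]
using the standard formula $\E|u_1|=\Gamma(D/2)/(\sqrt\pi\,\Gamma((D+1)/2))$. This is the fixed-subspace identity cited from \citet[Theorem~5.4]{lin2026agzo}. So
\[
\E_R\cos=\beta_{pk}\,\frac{\|GP_{Q_S}\|_F}{\|G\|_F},
\]
and for full activation $J_{K,K}=\beta_{pK}\sqrt{C_Q}$, since $Q_S=Q$ deterministically.

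\textbf{Comparison (main obstacle).} The difficulty is that $J_{K,k}=\beta_{pk}\,\E_S\bigl[\|GP_{Q_S}\|_F\bigr]/\|G\|_F$ involves the expectation of a square root, whereas the hypothesis controls only the expected squared capture. Jensen's inequality runs the wrong way for a lower bound, so I would use the cruder bound valid because each capture ratio lies in $[0,1]$. Let $c_S=\|GP_{Q_S}\|_F^2/\|G\|_F^2\le C_Q\le1$. Then
\[
\sqrt{c_S}=\frac{c_S}{\sqrt{c_S}}\ge \frac{c_S}{\sqrt{C_Q}},
\qquad\text{so}\qquad
\E\sqrt{c_S}\ge \frac{C_QA_{K,k}}{\sqrt{C_Q}}=\sqrt{C_Q}\,A_{K,k}.
\]
Hence $J_{K,k}\ge\beta_{pk}\sqrt{C_Q}\,A_{K,k}$. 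Under $A_{K,k}>\beta_{pK}/\beta_{pk}$ and $C_Q>0$, this gives
\[
J_{K,k}\ge\beta_{pk}\sqrt{C_Q}\,A_{K,k}>\beta_{pK}\sqrt{C_Q}=J_{K,K}.
\]
The condition is nontrivial because $\beta_D$ is decreasing in $D$, so $\beta_{pK}/\beta_{pk}<1$ for $k<K$. I would verify this monotonicity by the Gamma-ratio (Gautschi) bounds.
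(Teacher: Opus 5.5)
Your proposal is correct and follows the paper's proof essentially step for step. The capture identity uses the tail inclusion probability $\rho$. The cosine step uses the fixed-subspace identity $\E_R\cos(\widehat G_{k,0},G)=\beta_{pk}\|GQ_S\|_F/\|G\|_F$. The comparison uses the linear lower bound $\sqrt{c_S}\ge c_S/\sqrt{C_Q}$, which the paper writes as $X\ge X^2/b$, to get $J_{K,k}\ge\beta_{pk}\sqrt{C_Q}\,A_{K,k}>\beta_{pK}\sqrt{C_Q}=J_{K,K}$.

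One cosmetic point: state $\sqrt{c_S}\ge c_S/\sqrt{C_Q}$ directly for $0\le c_S\le C_Q$ rather than through $c_S/\sqrt{c_S}$, so that the case $c_S=0$ needs no separate treatment.
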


Since $\beta_D\simeq\sqrt{2/(\pi D)}$ for large $D$, the sufficient threshold is approximately $\sqrt{k/K}$. Offline capture diagnostics meet this condition in the evaluated settings (Appendix~\ref{app:active-width-diagnostics}). Online scans of maintained and shared widths are in Appendices~\ref{app:width-training}, \ref{app:sampled-maintained-width}, and~\ref{app:shared-width-training}; proofs are in Appendix~\ref{app:usefulness}.

\subsection{One-sided estimation in a high-capture subspace}
\label{sec:theory-op}
Online training and offline ablations favor the RLOO estimator used by \method{} over two-sided estimation (Appendices~\ref{app:online-op-tp} and~\ref{app:offline-population}). To examine how subspace quality affects this advantage, we vary gradient capture at fixed active width and compare the directional alignment of the two estimators through $\Delta_{\cos}=\cos(\widehat G_{\rm OP},G)-\cos(\widehat G_{\rm TP},G)$.

Figure~\ref{fig:capture-op-tp} shows that the OP--TP alignment gain increases with gradient capture. Actual-loss results closely follow a linear control constructed from the same-batch gradient, indicating that the trend is largely explained by the first-order signal. Further results are in Appendix~\ref{app:space-quality}.

\begin{figure}[tbp]
  \centering
  \includegraphics[width=\textwidth]{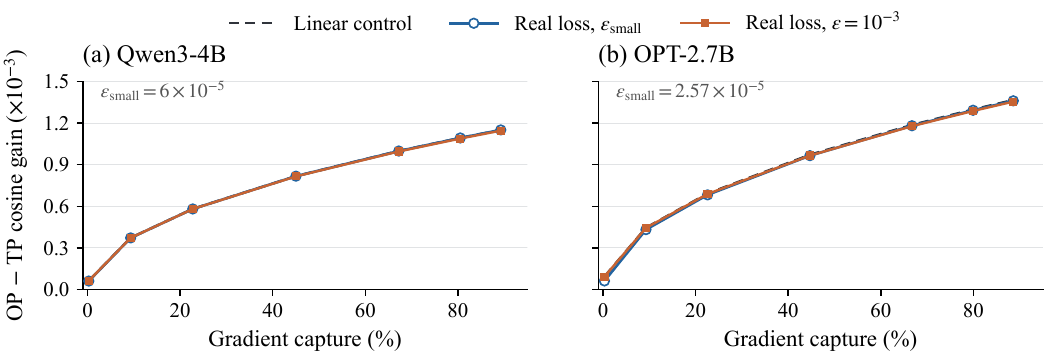}
\caption{OP--TP cosine gain versus gradient capture on RTE. OP16 and TP8 each use 16 perturbed loss evaluations at $k=64$. Points average matched populations across checkpoints and batches; dashed curves show the corresponding linear control.}
  \label{fig:capture-op-tp}
\end{figure}

\section{Experiments}
\label{sec:experiments}

\newcommand{\expscore}[2]{$#1_{\pm#2}$}

In this section, we empirically evaluate the proposed AIM-ZO methods over a wide collection of datasets and LLMs. Section \ref{5.1} compares the performance of AIM-ZO on two mainly chosen LLMs for ZO. Section \ref{5.3} evaluates the peak GPU memory consumption. Additional experiments are in the Appendix: Appendix~\ref{app:structural-diagnostics} complements the analysis in Section~\ref{sec:theory} and Appendix~\ref{app:additional-finetuning} provides broader fine-tuning results, including all 11 OPT-2.7B tasks and additional evaluations on Qwen3 LLMs.

\textbf{Experiment Settings.} We consider a wide collection of LLMs, including OPT-2.7B, OPT-13B, OPT-30B, and Qwen3-8B-Base; supplementary experiments consider two more LLMs, Qwen3-0.6B-Base and Qwen3-4B. We fine-tune LLMs over the 11 downstream tasks included in \citet{malladi2023mezo}; OPT-2.7B covers all 11 tasks, while the main tables report six-task comparisons. 

\textbf{Baselines.} We include various baselines, including \textbf{1)} zero-shot scores provide a non-training reference. \textbf{2)} MeZO \cite{malladi2023mezo} with full perturbation. \textbf{3)} HiZOO \citep{zhao2025hizoo} and CurvZO \citep{wang2026curvzo} with curvature-guided scaling, and \textbf{4)} LoZO \citep{chen2025lozo}, AGZO \citep{lin2026agzo}, and ZO-Muon \citep{lang2026zomuon} as subspace-based perturbation baselines.

\textbf{Implementation details }All fine-tuning methods use BF16 and approximately 40,000 training forward evaluations, with update counts adjusted for each method's forward evaluations per update. \method{} uses 16 forward evaluations per update and thus runs for 2,500 updates, with $K=128$, $h=48$, and $k=64$. Most results use three seeds; tables report mean percentages and available sample standard deviations. Data splits and evaluation protocols are in Appendix~\ref{app:experimental-setup}; component ablations are in Appendix~\ref{app:component-ablations}.

\begin{table}[tbp]
\centering\vspace{-1em}
\caption{Multi-method comparison on OPT-2.7B and OPT-13B. Scores are accuracy (\%) except SQuAD (F1). Avg. is the unweighted mean over six tasks. Best mean scores are bold; subscripts report available sample standard deviations. Evaluation details are in Appendix~\ref{app:additional-finetuning}.}
\label{tab:opt_multimethod}
\setlength{\tabcolsep}{5pt}
\resizebox{\linewidth}{!}{%
\begin{tabular}{lccccccc}
\toprule
Method & RTE & BoolQ & SST-2 & WiC & WSC & SQuAD & Avg. \\
\midrule
\multicolumn{8}{c}{OPT-2.7B} \\
\midrule
Zero-shot & 55.23 & 52.87 & 56.65 & 54.86 & 36.54 & 26.92 & 47.18 \\
MeZO & \expscore{65.13}{1.67} & \expscore{66.10}{2.26} & \expscore{92.50}{0.54} & \expscore{58.53}{0.41} & \expscore{54.81}{1.52} & \expscore{80.99}{1.42} & 69.68 \\
CurvZO & \expscore{64.53}{4.83} & \expscore{68.01}{0.62} & \expscore{\mathbf{92.96}}{0.31} & \expscore{57.96}{2.07} & \expscore{46.47}{7.28} & \expscore{58.71}{2.47} & 64.77 \\
HiZOO & \expscore{60.29}{2.94} & \expscore{67.31}{1.34} & \expscore{91.77}{0.67} & \expscore{58.88}{0.80} & \expscore{53.53}{2.42} & \expscore{73.81}{0.90} & 67.60 \\
AGZO & \expscore{64.40}{3.10} & \expscore{66.26}{1.46} & \expscore{92.59}{0.51} & \expscore{57.02}{1.73} & \expscore{47.76}{2.22} & \expscore{29.82}{5.54} & 59.64 \\
ZO-Muon & \expscore{63.32}{2.53} & \expscore{\mathbf{68.90}}{1.12} & \expscore{92.75}{0.52} & \expscore{\mathbf{61.13}}{1.12} & \expscore{51.60}{3.38} & \expscore{78.93}{1.13} & 69.44 \\
LoZO & \expscore{61.88}{4.48} & \expscore{65.33}{0.73} & \expscore{91.51}{0.83} & \expscore{53.67}{2.70} & \expscore{50.96}{7.26} & \expscore{43.52}{1.24} & 61.15 \\
\textsc{AIM-ZO (Ours)} & \expscore{\mathbf{67.51}}{3.18} & \expscore{67.22}{1.67} & \expscore{92.87}{0.47} & \expscore{60.13}{1.70} & \expscore{\mathbf{56.73}}{4.35} & \expscore{\mathbf{81.19}}{0.49} & $\mathbf{70.94}$ \\
\midrule
\multicolumn{8}{c}{OPT-13B} \\
\midrule
Zero-shot & 61.73 & 60.46 & 60.21 & 55.17 & 40.38 & 40.80 & 53.13 \\
MeZO & \expscore{67.36}{1.54} & \expscore{71.08}{1.37} & \expscore{92.00}{0.69} & \expscore{58.58}{1.93} & \expscore{54.49}{8.72} & \expscore{58.64}{1.42} & 67.02 \\
CurvZO & \expscore{52.23}{2.73} & \expscore{61.73}{0.42} & \expscore{49.39}{0.78} & \expscore{49.01}{1.07} & \expscore{\mathbf{61.22}}{3.89} & \expscore{65.56}{1.42} & 56.52 \\
HiZOO & \expscore{\mathbf{73.65}}{1.65} & \expscore{71.58}{0.12} & \expscore{91.63}{1.71} & \expscore{\mathbf{59.82}}{0.59} & \expscore{58.01}{1.11} & \expscore{75.45}{0.74} & 71.69 \\
AGZO & \expscore{52.59}{5.76} & \expscore{61.43}{1.10} & \expscore{88.23}{0.98} & \expscore{49.43}{2.20} & \expscore{57.37}{2.00} & \expscore{75.24}{1.24} & 64.05 \\
ZO-Muon & \expscore{63.06}{1.50} & \expscore{66.53}{1.39} & \expscore{91.78}{0.65} & \expscore{56.79}{0.94} & \expscore{50.00}{5.85} & \expscore{80.35}{0.93} & 68.08 \\
LoZO & \expscore{49.70}{2.80} & \expscore{62.15}{0.07} & \expscore{50.27}{0.52} & \expscore{47.39}{1.01} & \expscore{57.37}{5.88} & \expscore{47.40}{6.68} & 52.38 \\
\textsc{AIM-ZO (Ours)} & \expscore{70.69}{2.97} & \expscore{\mathbf{71.93}}{2.65} & \expscore{\mathbf{93.78}}{0.39} & \expscore{59.52}{0.39} & \expscore{57.05}{2.42} & \expscore{\mathbf{81.58}}{0.25} & $\mathbf{72.42}$ \\
\bottomrule
\end{tabular}}\vspace{-1em}
\end{table}

\subsection{Comparison with zeroth-order methods}\label{5.1}

Table~\ref{tab:opt_multimethod} compares \method{} with six representative ZO baselines on OPT-2.7B and OPT-13B under matched forward-evaluation budgets. All methods are evaluated using our unified protocol, which may differ from the settings reported in their original papers.

\textbf{Observation 1: AIM-ZO achieves the strongest overall performance across tasks.}
\method{} obtains the highest six-task average among methods with complete results on both OPT-2.7B (70.94\%) and OPT-13B (72.42\%). Relative to MeZO, it improves all six reported tasks at both model scales, and further achieves higher mean performance on nine of the full 11 OPT-2.7B tasks (Appendix~\ref{app:additional-finetuning}). Although several baselines remain stronger on individual tasks, \method{} avoids the large task-specific degradations observed for some alternatives and therefore provides more balanced performance across the benchmark.

\textbf{Observation 2: AIM-ZO consistently improves over existing subspace-based ZO methods.}
Compared with representative subspace methods, including LoZO, AGZO, and ZO-Muon, \method{} achieves the highest six-task average at both scales. On OPT-2.7B, it improves the average from 69.44\% for the strongest subspace baseline, ZO-Muon, to 70.94\%; on OPT-13B, the corresponding gap increases from 68.08\% to 72.42\%. The advantage is particularly pronounced over AGZO and LoZO, whose compact perturbation spaces show substantial degradation on several tasks. These results suggest that the quality and maintenance of the perturbation subspace matter beyond simply restricting ZO updates to a low-dimensional space.
 
Besides OPT-2.7B and OPT-13B, Table~\ref{tab:scaling_results} extends the comparison to OPT-30B and Qwen3-8B. On OPT-30B, \method{} improves over MeZO on all six tasks, raising the average by 2.85 percentage points. On Qwen3-8B, it improves on five of six tasks and raises the average by 0.93 points over MeZO; SQuAD has the largest gain, while SST-2 declines. These results show stronger gains as OPT scales and positive gains on Qwen3.

\begin{table}[tbp]
\centering\vspace{-1em}
\caption{Scaling and model-family comparisons on OPT-30B and Qwen3-8B (\%). SQuAD uses F1; other tasks use accuracy. Avg. is the unweighted mean over six tasks. Best mean scores are bold; subscripts report standard deviations. Qwen3-8B entries use three seeds and development-selected checkpoints.}
\label{tab:scaling_results}
\label{tab:opt30_results}
\setlength{\tabcolsep}{5pt}
\resizebox{\linewidth}{!}{%
\begin{tabular}{lccccccc}
\toprule
\midrule
\multicolumn{8}{c}{OPT-30B} \\
\midrule
Method & RTE & BoolQ & SST-2 & WiC & WSC & SQuAD & Avg. \\
\midrule
Zero-shot & 53.79 & 40.90 & 58.83 & 52.98 & 37.50 & 46.48 & 48.41 \\
MeZO & \expscore{63.25}{1.79} & \expscore{70.74}{1.17} & \expscore{91.08}{0.36} & \expscore{55.96}{1.81} & \expscore{58.08}{2.85} & \expscore{80.22}{0.99} & 69.89 \\
\textsc{AIM-ZO (Ours)} & \expscore{\mathbf{68.16}}{2.66} & \expscore{\mathbf{74.90}}{1.53} & \expscore{\mathbf{93.47}}{0.56} & \expscore{\mathbf{57.71}}{0.91} & \expscore{\mathbf{59.23}}{2.77} & \expscore{\mathbf{82.96}}{0.69} & $\mathbf{72.74}$ \\
\midrule
\multicolumn{8}{c}{Qwen3-8B-Base} \\
\midrule
Method & RTE & BoolQ & SST-2 & WiC & MultiRC & SQuAD & Avg. \\
\midrule
Zero-shot & 86.28 & 74.60 & 58.49 & 65.67 & 72.77 & 83.67 & 73.58 \\
MeZO & \expscore{88.33}{1.37} & \expscore{85.75}{0.36} & \expscore{\mathbf{90.94}}{0.40} & \expscore{67.92}{1.31} & \expscore{86.01}{0.48} & \expscore{86.90}{1.95} & 84.31 \\
AGZO & \expscore{86.88}{1.50} & \expscore{79.40}{0.85} & \expscore{81.42}{2.88} & \expscore{65.78}{1.75} & \expscore{83.76}{1.88} & \expscore{68.20}{4.77} & 77.57 \\
\textsc{AIM-ZO (Ours)} & \expscore{\mathbf{89.53}}{1.08} & \expscore{\mathbf{85.76}}{0.13} & \expscore{89.49}{0.81} & \expscore{\mathbf{68.18}}{1.51} & \expscore{\mathbf{86.68}}{0.16} & \expscore{\mathbf{91.80}}{0.42} & $\mathbf{85.24}$ \\
\bottomrule
\end{tabular}}
\end{table}

\subsection{Ablation Studies}

We ablate the three main design choices of \method{}. First, \textbf{online subspace maintenance} is beneficial: on OPT-2.7B RTE, updating the maintained subspace every step achieves higher mean performance than either a fixed activation subspace or less frequent updates (Appendix~\ref{app:component-ablations}). Second, \textbf{per-perturbation active-subspace resampling} consistently improves over using one common active subspace for the whole population across all four tested tasks, supporting the benefit of exposing different perturbations to different parts of the maintained space (Appendix~\ref{app:active-resampling}). Third, the \textbf{population-centered one-sided estimator} outperforms the two-sided alternative at both the best and final checkpoints on OPT-2.7B RTE (Appendix~\ref{app:online-op-tp}). Additional width, population-size, and perturbation-rank ablations are reported in Appendix~\ref{app:component-ablations}.

\subsection{Peak GPU memory}\label{5.3}
Figure~\ref{fig:drop-memory} compares peak memory during complete training updates. On Qwen3-0.6B-Base/DROP, \method{} uses 0.242--0.254\,GiB more than MeZO and less than AGZO and full-parameter SGD across the measured configurations. SGD runs out of memory at the largest tested tensor lengths or batch sizes, and AGZO at batch size 64. Measurement details, runtime comparisons, and caching ablations are in Appendix~\ref{app:runtime-memory}.

\begin{figure}[t]
\centering
\includegraphics[width=\textwidth]{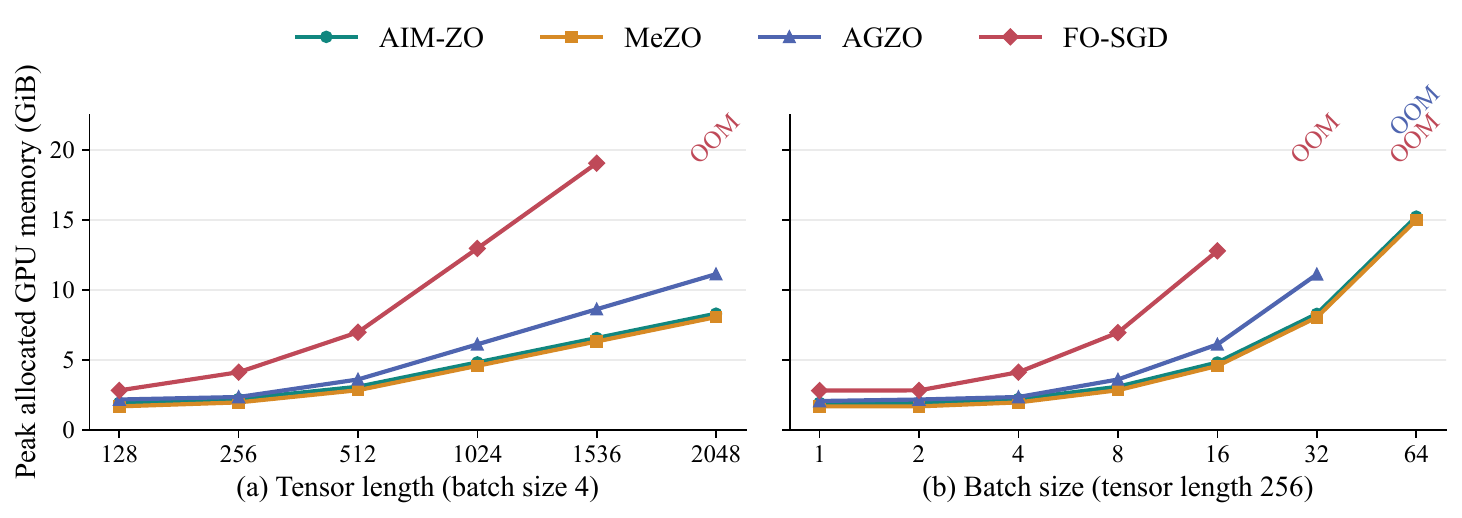}
\caption{Peak allocated GPU memory on Qwen3-0.6B-Base/DROP with BF16 weights. Left: varying tensor length at batch size four. Right: varying batch size at tensor length 256. OOM marks observed failures, not measured memory values.}
\label{fig:drop-memory}
\end{figure}

\section{Conclusion}
\label{sec:conclusion}

We presented \method{}, which maintains an activation-informed subspace across training and selects smaller shared-and-sampled active subspaces for forward-only ZO updates. Our analysis bounds gradient capture by the maintained subspace and characterizes when the active subspace improves directional alignment over full activation. Across five models and 11 tasks, \method{} attains the best six-task average among fully evaluated ZO baselines on OPT-2.7B and improves over MeZO on all six OPT-30B tasks. Gains on Qwen3 are smaller, with modest peak-memory overhead.

The analysis assumes post-burn-in tracking conditions and Gaussian perturbations, leaving the implemented rank-one updates and changing active subspaces only partially characterized. Adapting the preset widths and tail sampling using loss feedback remains future work.

\clearpage
\bibliography{references}

@inproceedings{malladi2023mezo,
  title     = {Fine-Tuning Language Models with Just Forward Passes},
  author    = {Malladi, Sadhika and Gao, Tianyu and Nichani, Eshaan and Damian, Alex and Lee, Jason D. and Chen, Danqi and Arora, Sanjeev},
  booktitle = {Advances in Neural Information Processing Systems},
  volume    = {36},
  pages     = {53038--53075},
  year      = {2023}
}

@inproceedings{zhao2025hizoo,
  title     = {Second-Order Fine-Tuning without Pain for {LLM}s: A Hessian Informed Zeroth-Order Optimizer},
  author    = {Zhao, Yanjun and Dang, Sizhe and Ye, Haishan and Dai, Guang and Qian, Yi and Tsang, Ivor W.},
  booktitle = {International Conference on Learning Representations},
  year      = {2025}
}

@article{wang2026curvzo,
  title   = {{CurvZO}: Adaptive Curvature-Guided Sparse Zeroth-Order Optimization for Efficient {LLM} Fine-Tuning},
  author  = {Wang, Shuo and Chen, Ziyu and Tang, Ming},
  journal = {arXiv preprint arXiv:2603.21725},
  year    = {2026}
}

@article{lang2026zomuon,
  title   = {Powering Up Zeroth-Order Training via Subspace Gradient Orthogonalization},
  author  = {Lang, Yicheng and Wang, Changsheng and Zhang, Yihua and Hong, Mingyi and Zhang, Zheng and Yin, Wotao and Liu, Sijia},
  journal = {arXiv preprint arXiv:2602.17155},
  year    = {2026}
}

@article{ghadimi2013stochastic,
  title   = {Stochastic First- and Zeroth-Order Methods for Nonconvex Stochastic Programming},
  author  = {Ghadimi, Saeed and Lan, Guanghui},
  journal = {SIAM Journal on Optimization},
  volume  = {23},
  number  = {4},
  pages   = {2341--2368},
  year    = {2013},
  doi     = {10.1137/120880811}
}

@article{nesterov2017random,
  title   = {Random Gradient-Free Minimization of Convex Functions},
  author  = {Nesterov, Yurii and Spokoiny, Vladimir},
  journal = {Foundations of Computational Mathematics},
  volume  = {17},
  number  = {2},
  pages   = {527--566},
  year    = {2017},
  doi     = {10.1007/s10208-015-9296-2}
}

@inproceedings{zhao2024galore,
  title     = {{GaLore}: Memory-Efficient {LLM} Training by Gradient Low-Rank Projection},
  author    = {Zhao, Jiawei and Zhang, Zhenyu and Chen, Beidi and Wang, Zhangyang and Anandkumar, Anima and Tian, Yuandong},
  booktitle = {Proceedings of the 41st International Conference on Machine Learning},
  volume    = {235},
  pages     = {61121--61143},
  year      = {2024},
  publisher = {PMLR}
}

@inproceedings{chen2025lozo,
  title     = {Enhancing Zeroth-order Fine-tuning for Language Models with Low-rank Structures},
  author    = {Chen, Yiming and Zhang, Yuan and Cao, Liyuan and Yuan, Kun and Wen, Zaiwen},
  booktitle = {International Conference on Learning Representations},
  year      = {2025}
}

@inproceedings{yu2025subzero,
  title     = {Zeroth-Order Fine-Tuning of {LLM}s in Random Subspaces},
  author    = {Yu, Ziming and Zhou, Pan and Wang, Sike and Li, Jia and Tian, Mi and Huang, Hua},
  booktitle = {Proceedings of the IEEE/CVF International Conference on Computer Vision},
  pages     = {4475--4485},
  year      = {2025}
}

@inproceedings{choromanski2019asebo,
  title     = {From Complexity to Simplicity: Adaptive {ES}-Active Subspaces for Blackbox Optimization},
  author    = {Choromanski, Krzysztof M. and Pacchiano, Aldo and Parker-Holder, Jack and Tang, Yunhao and Sindhwani, Vikas},
  booktitle = {Advances in Neural Information Processing Systems},
  volume    = {32},
  year      = {2019}
}

@inproceedings{lin2026agzo,
  title     = {{AGZO}: Activation-Guided Zeroth-Order Optimization for {LLM} Fine-Tuning},
  author    = {Lin, Wei and Jiang, Yining and Song, Qingyu and Xiang, Qiao and Xu, Hong},
  booktitle = {Proceedings of the International Conference on Machine Learning},
  year      = {2026},
  url       = {https://arxiv.org/abs/2601.17261}
}

@article{dong2026zoact,
  title   = {{ZO-Act}: Efficient Zeroth-Order Fine-Tuning via One-Shot Activation-Informed Low-Rank Subspaces},
  author  = {Dong, Xun and Xu, Yibo and Wang, Naigang and Li, Xin and Yin, Penghang and Yang, Zi},
  journal = {arXiv preprint arXiv:2607.01125},
  year    = {2026}
}

@inproceedings{allenzhu2017oja,
  title     = {First Efficient Convergence for Streaming $k$-{PCA}: A Global, Gap-Free, and Near-Optimal Rate},
  author    = {Allen-Zhu, Zeyuan and Li, Yuanzhi},
  booktitle = {Proceedings of the 58th Annual IEEE Symposium on Foundations of Computer Science},
  pages     = {487--492},
  year      = {2017},
  doi       = {10.1109/FOCS.2017.51}
}

@inproceedings{huang2021streaming,
  title     = {Streaming $k$-{PCA}: Efficient Guarantees for {Oja}'s Algorithm, beyond Rank-One Updates},
  author    = {Huang, De and Niles-Weed, Jonathan and Ward, Rachel},
  booktitle = {Proceedings of the 34th Conference on Learning Theory},
  volume    = {134},
  pages     = {2463--2498},
  year      = {2021},
  publisher = {PMLR}
}

@inproceedings{bienstock2022robust,
  title     = {Robust Streaming {PCA}},
  author    = {Bienstock, Daniel and Jeong, Minchan and Shukla, Apurv and Yun, Se-Young},
  booktitle = {Advances in Neural Information Processing Systems},
  volume    = {35},
  pages     = {4231--4243},
  year      = {2022}
}

@inproceedings{liu2025sparsemezo,
  title     = {Sparse Me{ZO}: Less Parameters for Better Performance in Zeroth-Order {LLM} Fine-Tuning},
  author    = {Liu, Yong and Zhu, Zirui and Gong, Chaoyu and Cheng, Minhao and Hsieh, Cho-Jui and You, Yang},
  booktitle = {Advances in Neural Information Processing Systems},
  volume    = {38},
  year      = {2025}
}

@inproceedings{guo2025staticsparsity,
  title     = {Zeroth-Order Fine-Tuning of {LLM}s with Transferable Static Sparsity},
  author    = {Guo, Wentao and Long, Jikai and Zeng, Yimeng and Liu, Zirui and Yang, Xinyu and Ran, Yide and Gardner, Jacob R. and Bastani, Osbert and De Sa, Christopher and Yu, Xiaodong and Chen, Beidi and Xu, Zhaozhuo},
  booktitle = {International Conference on Learning Representations},
  year      = {2025}
}

@article{gurari2018tiny,
  title   = {Gradient Descent Happens in a Tiny Subspace},
  author  = {Gur-Ari, Guy and Roberts, Daniel A. and Dyer, Ethan},
  journal = {arXiv preprint arXiv:1812.04754},
  year    = {2018}
}

@inproceedings{jaiswal2025stabilization,
  title     = {From Low Rank Gradient Subspace Stabilization to Low-Rank Weights: Observations, Theories, and Applications},
  author    = {Jaiswal, Ajay Kumar and Wang, Yifan and Yin, Lu and Liu, Shiwei and Chen, Runjin and Zhao, Jiawei and Grama, Ananth and Tian, Yuandong and Wang, Zhangyang},
  booktitle = {Proceedings of the 42nd International Conference on Machine Learning},
  volume    = {267},
  pages     = {26740--26756},
  year      = {2025},
  publisher = {PMLR}
}

@inproceedings{liang2024osd,
  title     = {Memory-Efficient {LLM} Training with Online Subspace Descent},
  author    = {Liang, Kaizhao and Liu, Bo and Chen, Lizhang and Liu, Qiang},
  booktitle = {Advances in Neural Information Processing Systems},
  volume    = {37},
  year      = {2024},
  doi       = {10.52202/079017-2054}
}

@inproceedings{rajabi2025subtrack,
  title     = {{SubTrack++}: Gradient Subspace Tracking for Scalable {LLM} Training},
  author    = {Rajabi, Sahar and Nonta, Nayeema and Rambhatla, Sirisha},
  booktitle = {Advances in Neural Information Processing Systems},
  volume    = {38},
  year      = {2025},
  doi       = {10.52202/085713-1060}
}

@article{tropp2011freedman,
  title = {Freedman's Inequality for Matrix Martingales},
  author = {Tropp, Joel A.},
  journal = {Electronic Communications in Probability},
  volume = {16},
  pages = {262--270},
  year = {2011},
  url = {https://tropp.caltech.edu/papers/Tro11-Freedmans-Inequality.pdf}
}

@article{zheng2026agenticesopt,
  title = {Agentic {ESOpt}: Fine-Tuning Long-Horizon {LLM} Agents with Minimal {GPU} Requirements},
  author = {Zheng, Zhi and Chen, Rongsheng and Ba, Yunpeng and Wang, Zhenkun and Teh, Yee Whye and Lee, Wee Sun},
  journal = {arXiv preprint arXiv:2608.17310},
  year = {2026},
  url = {https://arxiv.org/abs/2608.17310}
}

@article{ba2026evolutionstrategies,
  title = {Understanding Evolution Strategies for {LLM} Reasoning: Broader Reasoning Coverage than {GRPO}},
  author = {Ba, Yunpeng and Zheng, Zhi and Xie, Yue and Li, Jiaqing and Tong, Xialiang and Zhong, Tao and Yuan, Mingxuan and Lu, Zhichao and Wu, Xuyang and Wang, Zhenkun},
  journal = {arXiv preprint arXiv:2608.27351},
  year = {2026},
  url = {https://arxiv.org/abs/2608.27351}
}
\bibliographystyle{iclr2027_conference}

\clearpage
\appendix
\section*{Appendix Contents}
\begingroup
\renewcommand{\arraystretch}{1.35}
\noindent\begin{tabular*}{\textwidth}{@{}p{0.88\textwidth}@{\extracolsep{\fill}}r@{}}
\hyperref[app:key-notation]{Key Notation} & \pageref*{app:key-notation} \\
\hyperref[app:tracking]{A. Maintained-Subspace Tracking and Gradient Capture} & \pageref*{app:tracking} \\
\hyperref[app:usefulness]{B. Energy Capture and Directional Alignment in Active Subspaces} & \pageref*{app:usefulness} \\
\hyperref[app:experimental-setup]{C. Experimental Setup and Evaluation Protocol} & \pageref*{app:experimental-setup} \\
\hyperref[app:structural-diagnostics]{D. Supplementary Experiments for Section~\ref*{sec:theory}} & \pageref*{app:structural-diagnostics} \\
\hyperref[app:component-ablations]{E. Active-Subspace Ablations} & \pageref*{app:component-ablations} \\
\hyperref[app:population-quality]{F. Space Quality and One-Sided Estimation} & \pageref*{app:population-quality} \\
\hyperref[app:runtime-memory]{G. Runtime and Peak Memory} & \pageref*{app:runtime-memory} \\
\hyperref[app:additional-finetuning]{H. Additional Fine-Tuning Results} & \pageref*{app:additional-finetuning} \\
\hyperref[app:additional-related]{I. Additional Related Work} & \pageref*{app:additional-related} \\
\end{tabular*}
\endgroup

\phantomsection
\newpage
\section*{Key Notation}
\label{app:key-notation}
We summarize the main symbols used in the analysis; other quantities are defined where they first appear. Each basis matrix represents the subspace spanned by its columns, with $P_Q=QQ^\top$ denoting the corresponding projector. The unindexed $Q$ denotes the maintained basis in Section~\ref{sec:theory-active} and Appendix~\ref{app:usefulness}.

\begingroup
\small
\renewcommand{\arraystretch}{1.22}
\begin{tabular}{@{}>{\raggedright\arraybackslash}p{0.20\textwidth}@{\hspace{0.02\textwidth}}>{\raggedright\arraybackslash}p{0.78\textwidth}@{}}
\toprule
\textbf{Symbol} & \textbf{Description} \\
\midrule
$K,k,h,s$ & Maintained, active, prefix, and sampled-tail widths; $k=h+s\le K$ \\
$Q_t,Q_S$ & Maintained basis at step $t$ and active basis selected from a fixed maintained basis \\
$U_t^\star,V_t^\star$ & Shared gradient basis and population activation target basis \\
$r,\mathcal I_t,T_{\rm w}$ & Shared-gradient rank, sliding window, and window length \\
$S_t,M_t$ & Uncentered activation Gram observation $H_t^\top H_t/n_t$ and its conditional mean \\
$\varepsilon_{{\rm sub},t}$ & Bound on the shared-gradient residual \\
$\varepsilon_{{\rm act},t}$ & Bound on the activation-alignment residual \\
$\varepsilon_{{\rm trk},t}$ & Spectral distance between maintained and target projectors \\
$\Phi_t$ & Squared chordal tracking error \\
$\omega_t,\bar\omega$ & Target-subspace drift and its upper bound \\
$\gamma,\eta_q$ & Eigengap lower bound and Oja step size \\
$C_Q$ & Gradient capture $\|GP_Q\|_F^2/\|G\|_F^2$ \\
$\tau_h,\rho$ & Prefix share of maintained energy and tail inclusion probability \\
$\theta$ & Fixed shared fraction in the active-width comparison; $h=\theta k$ \\
$\beta_D,\mathcal L_{K,k}$ & Gaussian direction factor and shared-and-sampled alignment lower bound \\
$R,D$ & Gaussian coefficient matrix in the active-width analysis and its dimension $D=pk$ \\
$\widehat G_{\rm OP},\widehat G_{\rm TP}$ & One-sided RLOO and two-sided gradient estimators \\
\bottomrule
\end{tabular}
\endgroup

\newpage
\section{Maintained-Subspace Tracking and Gradient Capture}
\label{app:tracking}
\subsection{Conditions and statement}
Let $t=0$ mark the end of burn-in and $T\ge1$ the number of subsequent Oja updates. Define $P_t=Q_tQ_t^\top$, $P_{\star,t}=V_t^\star V_t^{\star\top}$, and
\[
 \Phi_t=K-\operatorname{tr}(P_{\star,t}P_t)
 =\tfrac12\|P_t-P_{\star,t}\|_F^2,\qquad
 \varepsilon_{{\rm trk},t}^2\le\Phi_t\le K .
\]
The history $\mathcal F_{t-1}$ precedes the current observation $S_t$; $Q_{t-1},M_t,P_{\star,t}$ are measurable with respect to this history. We impose the following conditions for a fixed horizon $T$.
\begin{enumerate}
\item There is an $\mathcal F_0$-measurable event $\mathcal E_0=\{\Phi_0\le\varepsilon_{\rm burn}^2\}$ with probability at least $1-\delta_{\rm init}$.
\item $S_t\succeq0$, $M_t=\E[S_t\mid\mathcal F_{t-1}]$, and $\lambda_K(M_t)-\lambda_{K+1}(M_t)\ge\gamma>0$.
\item Almost surely $\|S_t\|_2\le\Lambda$ and $\E[\|S_t\|_2^2\mid\mathcal F_{t-1}]\le\Sigma^2$.
\item $\omega_t=\|P_{\star,t}-P_{\star,t-1}\|_F/\sqrt2\le\bar\omega$.
\end{enumerate}
\paragraph{Initialization.} Under the stationary sampling and two-phase step-size conditions of \citet{huang2021streaming}, Gaussian-initialized Oja attains a prescribed projector accuracy with high probability. In particular, $\|P_{Q_0}-P_{V_0^\star}\|_2\le\varepsilon_{\rm burn}/\sqrt K$ implies $\Phi_0\le\varepsilon_{\rm burn}^2$, since $\Phi_0\le K\|P_{Q_0}-P_{V_0^\star}\|_2^2$. For the evolving activation sequence, we assume this entrance condition.

Take $\eta_q>0$, $0\le\delta_{\rm init}<1$, and $0<\Sigma\le\Lambda$. Set $a=\gamma/2$, $\kappa=1-\eta_q a$ and, for $0<\delta<1$,
\[
 b_T(\delta)=2\sqrt2K\sqrt{\frac{\eta_q\Sigma^2}{a}\log\frac T\delta}
       +\frac83K\eta_q\Lambda\log\frac T\delta .
\]
The explicit sufficient step-size and entrance budgets are
\begin{equation}
 \eta_q\Lambda\le\tfrac18,\qquad
 \varepsilon_{\rm burn}^2\le\tfrac1{16},\qquad
 \bar\omega\le\tfrac{\eta_q a}{4},\qquad
 \frac{16K\eta_q\Lambda^2}{a}+b_T(\delta)\le\tfrac18 .
 \label{eq:app-tracking-budget}
\end{equation}
Since $0\preceq M_t\preceq\Lambda I$, we have $\gamma\le\Lambda$, so they imply $0<\eta_q a\le1/16$. Define
\[
 B_{\max}=\frac{\bar\omega^2}{\eta_q^2a^2}
             +\frac{16K\eta_q\Lambda^2}{a},\qquad
 r_{\rm safe}^2=\max\{\varepsilon_{\rm burn}^2,B_{\max}\}+b_T(\delta).
\]
We prove that, with probability at least $1-\delta_{\rm init}-\delta$, $\Phi_t\le r_{\rm safe}^2$ simultaneously for $t\le T$. The explicit expected bound underlying Theorem~\ref{thm:theory-track} is
\begin{equation}
\E\varepsilon_{{\rm trk},t}^2
\le \kappa^t\varepsilon_{\rm burn}^2+
(1-\kappa^t)\left[
\frac{4\bar\omega^2}{\eta_q^2\gamma^2}
+\frac{32K\eta_q\Sigma^2}{\gamma}\right]
+K(\delta+\delta_{\rm init}).
\label{eq:app-tracking-explicit}
\end{equation}
Using $\kappa^t\le e^{-\eta_q\gamma t/2}$ and $1-\kappa^t\le1$ gives the simplified bound in the main text.

\subsection{One-step expansion}
\begin{lemma}[Second-order expansion of one QR--Oja step]
\label{lem:oja-expansion}
Let $P=QQ^\top$, $Q^\top Q=I_K$, let $S=S^\top$, and let $P^+$ project onto $\operatorname{col}((I+\eta_qS)Q)$. If $\eta_q\|S\|_2\le1/8$, then
\begin{align}
 P^+={}&P+\eta_q[(I-P)SP+PS(I-P)]+R,\nonumber\\
 &\|R\|_2\le16\eta_q^2\|S\|_2^2.
 \label{eq:app-oja-expansion}
\end{align}
\end{lemma}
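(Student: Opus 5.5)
We write the new projector in closed form and expand it in $\eta_q$. Set $\eta=\eta_q$ and $Y=(I+\eta S)Q$. Since $Y$ has full column rank when $\eta\|S\|_2\le1/8$, the projector onto its column space is
\[
P^+=Y(Y^\top Y)^{-1}Y^\top .
\]
QR with fixed signs only changes the basis, not the column space, so $\operatorname{qf}$ plays no role. Expanding the Gram matrix gives $Y^\top Y=I+\eta E$, where
\[
E=2Q^\top SQ+\eta\,Q^\top S^2Q,\qquad \|E\|_2\le 2\|S\|_2+\eta\|S\|_2^2\le \tfrac{17}{8}\|S\|_2 .
\]
Hence $\eta\|E\|_2\le 17/64<1$, and the Neumann series
\[
(I+\eta E)^{-1}=I-\eta E+F,\qquad \|F\|_2\le \frac{\eta^2\|E\|_2^2}{1-\eta\|E\|_2},
\]
converges with an explicit remainder.

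Next we multiply out
\[
P^+=(Q+\eta SQ)(I-\eta E+F)(Q+\eta SQ)^\top
\]
and sort the terms by order in $\eta$.
\begin{itemize}
\item The zeroth-order term is $QQ^\top=P$.
\item The first-order terms are $\eta(SP+PS)-2\eta\,PSP$. Using $P^2=P$, this equals $\eta[(I-P)SP+PS(I-P)]$, which is the claimed linear term.
\end{itemize}
Every other term is collected into $R$. These are the cross terms $\eta^2 SQQ^\top S$, the pieces $-\eta^2(SQ\,E\,Q^\top+Q\,E\,Q^\top S)$, the term $\eta^2 QQ^\top S^2QQ^\top$ coming from the second part of $E$, the cubic terms, and the pieces containing $F$.

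To bound $R$ we use $\|Q\|_2=1$, $\|SQ\|_2\le\|S\|_2$, and $x:=\eta\|S\|_2\le1/8$. The $\eta^2$ terms contribute at most $(1+2\cdot\tfrac{17}{8}+1)\eta^2\|S\|_2^2$, up to careful bookkeeping. The cubic terms carry an extra factor $x\le1/8$. The $F$ terms are bounded by $(1+x)^2\|F\|_2\le(9/8)^2\cdot\frac{(17/8)^2}{1-17/64}\,\eta^2\|S\|_2^2$.

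The main obstacle is making the constants add up to at most $16$; this is pure bookkeeping. A crude sum of the bounds above gives roughly $6.25+0.5+7.7\approx 14.5$, so the budget of $16$ looks sufficient. If the Neumann remainder turns out too loose, the fallback is to bound $\|(I+\eta E)^{-1}\|_2\le 64/47$ directly and write the remainder of the inverse as $\eta^2E^2(I+\eta E)^{-1}$. As a sanity check, $P^+$ and $P$ are both projectors, so the final bound is only meaningful because $x$ is small, and the expansion is consistent with that regime.
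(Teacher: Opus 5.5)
Your proposal is correct and follows essentially the same route as the paper. Both write $P^+=Y(Y^\top Y)^{-1}Y^\top$, expand the inverse of the Gram matrix $I+2Q^\top(\eta S)Q+Q^\top(\eta S)^2Q$ with an explicit Neumann/resolvent remainder, multiply out, and bound the remainder term by term; the only difference is bookkeeping: the paper keeps just the linear part $-2Q^\top(\eta S)Q$ in the inverse and folds $-Q^\top(\eta S)^2Q$ into its remainder $F_E\le 8\eta^2\|S\|_2^2$ (constant $15.375$), while your split gives about $14.3$ (and the $\eta^2PS^2P$ term actually enters $R$ with a minus sign, which does not affect the norm bound).
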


\begin{proof}
Set
\begin{equation}
 E=\eta_qS,\qquad e=\|E\|_2\le\tfrac18,\qquad
 \widetilde Q=(I+E)Q.
\end{equation}
The update changes the Gram matrix of the columns of $Q$.  Define its first- and second-order parts by
\begin{equation}
 A_E=Q^\top EQ,\qquad B_E=Q^\top E^2Q,
 \qquad \Delta_E=2A_E+B_E.
\end{equation}
Because $Q^\top Q=I_K$ and $E=E^\top$,
\begin{equation}
 \widetilde Q^\top\widetilde Q=I+\Delta_E.
\end{equation}
Moreover, $\|A_E\|_2\le e$, $\|B_E\|_2\le e^2$, and hence
\begin{equation}
 \|\Delta_E\|_2\le2e+e^2\le\frac{17}{64}<1.
\end{equation}
The exact resolvent identity $(I+\Delta_E)^{-1}=I-\Delta_E+\Delta_E^2(I+\Delta_E)^{-1}$ therefore gives

\begin{equation}
 (I+\Delta_E)^{-1}=I-2A_E+F_E,\qquad
 F_E=-B_E+\Delta_E^2(I+\Delta_E)^{-1}.
\end{equation}
Since $\|(I+\Delta_E)^{-1}\|_2\le(1-\|\Delta_E\|_2)^{-1}$,
\begin{equation}
 \|F_E\|_2
 \le e^2+\frac{(2e+e^2)^2}{1-2e-e^2}
 \le8e^2.
 \label{eq:app-inverse-remainder}
\end{equation}

QR does not change the column space of $\widetilde Q$, so
\begin{equation}
 P^+=\widetilde Q(\widetilde Q^\top\widetilde Q)^{-1}
 \widetilde Q^\top.
\end{equation}
Substitution of the inverse expansion, together with $QA_EQ^\top=PEP$, yields
\begin{equation}
 P^+=P+EP+PE-2PEP+R_E
 =P+(I-P)EP+PE(I-P)+R_E,
\end{equation}
where the terms of order at least two are collected exactly as
\begin{align}
 R_E={}&EPE-2EPEP-2PEPE-2EPEPE\nonumber\\
 &+(I+E)QF_EQ^\top(I+E).
 \label{eq:app-projector-remainder}
\end{align}
Using $\|P\|_2=\|Q\|_2=1$ and Equation~\ref{eq:app-inverse-remainder},
\begin{align}
 \|R_E\|_2
 &\le[1+2+2+2e+8(1+e)^2]e^2\nonumber\\
 &\le16e^2.
\end{align}
Finally, substituting $E=\eta_qS$ and renaming $R_E$ as $R$ proves Equation~\ref{eq:app-oja-expansion}.
\end{proof}

\subsection{Global inequality and local contraction}
Fix $t$ and abbreviate $P=P_{t-1}$. Put $u_t=K-\operatorname{tr}(P_{\star,t}P)$. In the orthonormal eigenbasis of $M_t$, write
\[
 P=\begin{bmatrix}X&Y\\Y^\top&Z\end{bmatrix},
 \qquad M_t=\begin{bmatrix}M_t^\parallel&0\\0&M_t^\perp\end{bmatrix}.
\]
Since $P^2=P$, $YY^\top=X-X^2$. Thus
\begin{align*}
 A_t&:=\operatorname{tr}(P_{\star,t}(I-P)M_tP)\\
 &=\operatorname{tr}(M_t^\parallel YY^\top)
       -\operatorname{tr}(M_t^\perp Y^\top Y)
 \ge\gamma\|Y\|_F^2 .
\end{align*}
If $\theta_i$ are the principal angles, then $u_t=\sum_i\sin^2\theta_i$ and $\|Y\|_F^2=\sum_i\sin^2\theta_i\cos^2\theta_i \ge u_t(1-u_t)_+$. Define the mean-zero increment
\[
 \zeta_t=-2\eta_q\operatorname{tr}
       (P_{\star,t}(I-P)(S_t-M_t)P).
\]
The expansion lemma and $|\operatorname{tr}(P_{\star,t}R)|\le K\|R\|_2$ give globally
\[
 \Phi_t\le u_t-2\eta_q\gamma u_t(1-u_t)_+
                 +16K\eta_q^2\|S_t\|_2^2+\zeta_t.
\]
The projector distance satisfies the triangle inequality, so $\sqrt{u_t}\le\sqrt{\Phi_{t-1}}+\omega_t$. When $u_t\le1/2$, the preceding inequality and
\[
 (1-2\eta_q a)(x+y)^2
 \le(1-\eta_q a)x^2+\frac{y^2}{\eta_q a}\quad(x,y\ge0)
\]
therefore imply
\begin{equation}
 \Phi_t\le\kappa\Phi_{t-1}
       +\frac{\bar\omega^2}{\eta_q a}
       +16K\eta_q^2\|S_t\|_2^2+\zeta_t .
 \label{eq:app-local-recursion}
\end{equation}
The following induction verifies $u_t\le1/2$ throughout the finite trajectory on a high-probability event.

\subsection{Concentration and invariant-region induction}
The matrix $P P_{\star,t}(I-P)$ is predictable, has rank at most $K$ and spectral norm at most one. Consequently
\[
 \E[\zeta_t\mid\mathcal F_{t-1}]=0,\quad
 |\zeta_t|\le4K\eta_q\Lambda,\quad
 \E[\zeta_t^2\mid\mathcal F_{t-1}]\le4K^2\eta_q^2\Sigma^2 .
\]
The variance bound follows by centering the scalar trace, whose second moment is at most $K^2\E[\|S_t\|_2^2\mid\mathcal F_{t-1}]$. For each fixed endpoint $t$, apply the scalar Freedman inequality \citep[Theorem~1.1]{tropp2011freedman} to $\sum_{s=1}^t\kappa^{t-s}\zeta_s$. For each fixed $t$, the weights $\kappa^{t-s}$ are deterministic; the partial sums in $s$ form a martingale. Its predictable variance is at most $4K^2\eta_q\Sigma^2/a$ and its increments are bounded by $4K\eta_q\Lambda$. The bound $\sqrt{2vx}+2bx/3$ for variance $v$ and increment bound $b$, followed by a union bound over $t\le T$, gives an event $\mathcal E$ such that
\[
 \Pr(\mathcal E\mid\mathcal E_0)\ge1-\delta,\qquad
 \sum_{s=1}^t\kappa^{t-s}\zeta_s\le b_T(\delta)
 \quad\text{for every }t\le T .
\]
Since $\mathcal E_0\in\mathcal F_0$, the same bound holds conditionally on $\mathcal E_0$. Write $v=\bar\omega^2/(\eta_q^2a^2)\le1/16$ and $w=16K\eta_q\Lambda^2/a$,
\[
 r_{\rm safe}^2
 \le \max\{\varepsilon_{\rm burn}^2,v\}+w+b_T(\delta)
 \le\tfrac1{16}+\tfrac18=\tfrac3{16}<\tfrac14.
\]
Thus the budgets imply $r_{\rm safe}\le1/2$, $\bar\omega\le1/64$, and $r_{\rm safe}+\bar\omega<1/\sqrt2$. On $\mathcal E_0\cap\mathcal E$, suppose inductively that $\Phi_j\le r_{\rm safe}^2$ for all $j<t$. Then $u_s\le(r_{\rm safe}+\bar\omega)^2<1/2$ for every $s\le t$. Iterating Equation~\ref{eq:app-local-recursion} and using the pathwise bound $\|S_s\|_2\le\Lambda$ yields
\[
 \Phi_t\le\kappa^t\Phi_0+(1-\kappa^t)B_{\max}+b_T(\delta)\le r_{\rm safe}^2.
\]
The base case is $\mathcal E_0$, completing the induction.

\subsection{Expected tracking error}
Condition on $\mathcal E_0\in\mathcal F_0$, so the noise increments retain their conditional mean-zero property. Write $\E_0,\Pr_0$ for expectation and probability under this conditioning, and define $J_t=\mathbf1\{\Phi_j\le r_{\rm safe}^2\text{ for every }0\le j\le t\}$ and $U_t=\Phi_tJ_t$. Because $J_{t-1}$ is predictable, $J_t\le J_{t-1}$, and $\Phi_t\ge0$, we have $U_t\le J_{t-1}\Phi_t$. On $\{J_{t-1}=1\}$ the local inequality applies; on its complement both $U_t$ and $U_{t-1}$ vanish. Predictability and mean-zero noise give
\[
 \E_0[U_t\mid\mathcal F_{t-1}]
 \le\kappa U_{t-1}
       +\frac{\bar\omega^2}{\eta_q a}
       +16K\eta_q^2\Sigma^2 .
\]
Iteration and $\Pr(J_t=0\mid\mathcal E_0)\le\delta$ imply
\[
 \E[\Phi_t\mid\mathcal E_0]\le
 \kappa^t\varepsilon_{\rm burn}^2+
 (1-\kappa^t)\left[\frac{\bar\omega^2}{\eta_q^2a^2}
                         +\frac{16K\eta_q\Sigma^2}{a}\right]+K\delta .
\]
Finally $\Phi_t\le K$ on burn-in failure. Substitute $a=\gamma/2$ and $\varepsilon_{{\rm trk},t}^2\le\Phi_t$ to obtain Equation~\ref{eq:theory-track-bound}.

\subsection{Proof of maintained-subspace capture}
\label{app:capture}
All projectors in this section are orthogonal. Fix a time $t$. Let $Q_t\in\mathbb R^{d\times K}$ be the maintained orthonormal basis, $V_t^\star\in\mathbb R^{d\times K}$ the top-$K$ eigenbasis of the population activation covariance $M_t$, and $U_t^\star\in\mathbb R^{d\times r}$ the shared rank-$r$ right gradient basis for the window $\mathcal I_t$. These three bases are held fixed for every gradient in the window. The preceding tracking theorem controls $\varepsilon_{{\rm trk},t}=\|P_{Q_t}-P_{V_t^\star}\|_2$; here we relate this error to gradient capture. The target $V_t^\star$ is defined by the conditional covariance $M_t$, not by a covariance averaged over future observations. For any nonzero matrix $G$, write
\[
 G(I-P_{Q_t})=G(I-P_{U_t^\star})(I-P_{V_t^\star})
 +GP_{U_t^\star}(I-P_{V_t^\star})
 +G(P_{V_t^\star}-P_{Q_t}).
\]
The first term has norm at most $\|G(I-P_{U_t^\star})\|_F$; the last has norm at most $\|G\|_F\varepsilon_{{\rm trk},t}$. Divide by $\|G\|_F$ and apply the triangle inequality in the product Hilbert space over $\mathcal I_t$, with weight $1/T_{\rm w}$. Assumptions~\ref{ass:theory-structure}--\ref{ass:theory-alignment} give
\[
 \left[\frac1{T_{\rm w}}\sum_{\tau\in\mathcal I_t}
 \frac{\|G_\tau(I-P_{Q_t})\|_F^2}{\|G_\tau\|_F^2}\right]^{1/2}
 \le\varepsilon_{{\rm sub},t}+\varepsilon_{{\rm act},t}
 +\varepsilon_{{\rm trk},t}.
\]
Orthogonality gives $\|G\|_F^2=\|GP_{Q_t}\|_F^2+\|G(I-P_{Q_t})\|_F^2$, proving Proposition~\ref{prop:theory-capture}. For an individual gradient, the same argument holds with its individual structural residuals. A window-average assumption alone does not imply the corresponding pointwise bound.

\newpage
\section{Energy Capture and Directional Alignment in Active Subspaces}
\label{app:usefulness}
\subsection{Fixed-subspace identity}
Fix $G\ne0$ and an ordered orthonormal maintained basis $Q\in\R^{d\times K}$ with $C_Q>0$. As in Section~\ref{sec:theory-active}, let $0<\theta<1$, $1\le k\le K$, and $h=\theta k\in\mathbb Z$. The active basis $Q_S$ contains the first $h$ columns of $Q$ and $k-h$ columns sampled uniformly without replacement from the remaining $K-h$ columns. Condition on the active basis $Q_S$. Let $B_S=GQ_S$ and $D=pk$. The fixed-subspace cosine identity of \citet[Theorem~5.4]{lin2026agzo}, applied to $Q_S$, gives
\[
 \E_R\cos(\langle G,RQ_S^\top\rangle_F RQ_S^\top,G)
 =\beta_D\frac{\|B_S\|_F}{\|G\|_F}.
\]
Indeed, the cosine is $|\langle B_S,R\rangle_F|/(\|G\|_F\|R\|_F)$. After aligning the first coefficient axis with $B_S$, the expected absolute first coordinate of a uniform unit vector is $\Gamma(D/2)/[\sqrt\pi\Gamma((D+1)/2)]$. We use the convention $\cos(0,G)=0$, including when $B_S=0$.

\subsection{Proof of the estimation-error decomposition}
Conditional on $Q_S$, write $B_S=GQ_S$ and $D=pk$. The Gaussian subspace moments of \citet[Theorem~2]{yu2025subzero} give $\E_R[\langle B_S,R\rangle_F R]=B_S$ and $\E_R[\langle B_S,R\rangle_F^2\|R\|_F^2]=(D+2)\|B_S\|_F^2$. Consequently, the conditional mean of $\widehat G_{k,0}$ is $GP_{Q_S}$, and its conditional variance is $(D+1)\|B_S\|_F^2$. The decomposition
\[
\widehat G_{k,0}-G
=(\widehat G_{k,0}-GP_{Q_S})-G(I-P_{Q_S})
\]
has orthogonal components in the Frobenius inner product. Hence
\[
\E_R\|\widehat G_{k,0}-G\|_F^2
=(pk+1)\|GP_{Q_S}\|_F^2+\|G(I-P_{Q_S})\|_F^2.
\]
This proves Equation~\ref{eq:theory-active-mse}.

\subsection{Proof of Proposition~\ref{prop:theory-active}}
Write $Q=[q_1,\ldots,q_K]$ and $e_j=\|Gq_j\|_2^2$. Each non-shared column has inclusion probability $\rho=(k-h)/(K-h)$. Since the selected columns are orthonormal,
\begin{align*}
\E_S\|GQ_S\|_F^2
&=\sum_{j=1}^h e_j+\rho\sum_{j=h+1}^K e_j\\
&=\|GP_Q\|_F^2[\tau_h+\rho(1-\tau_h)].
\end{align*}
Dividing by $\|G\|_F^2>0$ proves the capture identity in Equation~\ref{eq:theory-active-result}. Moreover, $A_{K,k}=1-(K-k)(1-\tau_h)/(K-h)$, so $\tau_h\ge1-\varepsilon$ implies $A_{K,k}\ge1-\varepsilon(K-k)/(K-h)$.

Under the Gaussian model, define
\[
\mathcal L_{K,k}:=\beta_{pk}\sqrt{C_Q}[\tau_h+\rho(1-\tau_h)].
\]
Let $X=\|GQ_S\|_F$ and $b=\|GP_Q\|_F>0$. Since $0\le X\le b$, $X\ge X^2/b$. The fixed-subspace identity therefore gives
\[
\E_{S,R}\cos(\widehat G_{k,0},G)
=\frac{\beta_{pk}}{\|G\|_F}\E_S X
\ge\beta_{pk}\sqrt{C_Q}[\tau_h+\rho(1-\tau_h)],
\]
which establishes $J_{K,k}\ge\mathcal L_{K,k}$. At $k=K$, $Q_S$ spans the maintained subspace, and the fixed-subspace identity gives $J_{K,K}=\beta_{pK}\sqrt{C_Q}$. Since $C_Q>0$, the sufficient condition in Proposition~\ref{prop:theory-active} makes the lower bound on $J_{K,k}$ strictly exceed this exact full-activation value.

\subsection{Supplementary directional-alignment bounds}
For comparison, concavity of the square root yields the upper bound
\[
\E_{S,R}\cos(\widehat G_{k,0},G)
\le\beta_{pk}\sqrt{C_Q[\tau_h+\rho(1-\tau_h)]}.
\]

\subsection{Separating maintained and active widths}
Substituting $h=\theta k$ into the exact bound gives
\[
\mathcal L_{K,k}=\beta_{pk}\sqrt{C_Q}
\left[\tau_{\theta k}
+\frac{(1-\theta)k}{K-\theta k}(1-\tau_{\theta k})\right].
\]
The gamma-ratio expansion $\beta_{pk}=\sqrt{2/(\pi pk)}[1+O((pk)^{-1})]$ gives the decomposition
\begin{equation}
\mathcal L_{K,k}=\sqrt{\frac{2C_Q}{\pi p}}
\left[\frac{\tau_{\theta k}}{\sqrt{k}}
+\frac{(1-\theta)(1-\tau_{\theta k})\sqrt{k}}{K-\theta k}\right]
\bigl[1+O((pk)^{-1})\bigr].
\label{eq:theory-width-decomposition}
\end{equation}

For fixed $\theta$, the continuous extension of the inclusion probability has derivatives
\[
\left.\frac{\partial\rho}{\partial K}\right|_k
=-\frac{(1-\theta)k}{(K-\theta k)^2}<0,
\qquad
\left.\frac{\partial\rho}{\partial k}\right|_K
=\frac{(1-\theta)K}{(K-\theta k)^2}>0.
\]
The same monotonicities hold on admissible integer widths.

At $k=K$, every column participates, $\rho=1$, and the fixed-subspace identity gives $\E\cos(\widehat G_{K,0},G)=\beta_{pK}\sqrt{C_Q}$, recovering \citet[Theorem~5.4]{lin2026agzo}. For a fixed-prefix selection $k=h$, the identity instead gives $\beta_{pk}\sqrt{C_Q\tau_h}$.

\subsection{Additional comparison under shared-energy concentration}
Since the shared subspace is contained in every active subspace, which is contained in the maintained subspace, orthogonal projection gives
\[
\|GP_{Q_h}\|_F^2
\le\|GP_{Q_S}\|_F^2
\le\|GP_Q\|_F^2.
\]
Since $\|GQ_S\|_F=\|GP_{Q_S}\|_F$, taking square roots, dividing by $\|G\|_F$, and applying the fixed-subspace identity yields
\[
\beta_{pk}\sqrt{\tau_{\theta k}C_Q}
\le \E_{S,R}\cos(\widehat G_{k,0},G)
\le \beta_{pk}\sqrt{C_Q}.
\]

Now suppose $\tau_{\theta k_0}\ge1-\varepsilon$, with $0\le\varepsilon<1$. Applying the lower bound at $k_0$ and the upper bound at $k_1$ gives
\begin{align*}
\E\cos(\widehat G_{k_0,0},G)
&\ge\beta_{pk_0}\sqrt{(1-\varepsilon)C_Q},\\
\E\cos(\widehat G_{k_1,0},G)
&\le\beta_{pk_1}\sqrt{C_Q}.
\end{align*}
Since $C_Q>0$, if $\beta_{pk_1}/\beta_{pk_0}<\sqrt{1-\varepsilon}$, then $\E\cos(\widehat G_{k_0,0},G)>\E\cos(\widehat G_{k_1,0},G)$.

\newpage
\section{Experimental Setup and Evaluation Protocol}
\label{app:experimental-setup}

\paragraph{Models, tasks, and metrics.} The main comparisons use OPT-2.7B, OPT-13B, OPT-30B, Qwen3-0.6B-Base, and Qwen3-8B-Base. The 11 tasks are RTE, BoolQ, SST-2, WiC, WSC, COPA, CB, MultiRC, ReCoRD, SQuAD, and DROP; OPT-2.7B is evaluated on all 11. The main OPT tables report RTE, BoolQ, SST-2, WiC, WSC, and SQuAD; the Qwen tables replace WSC with MultiRC. Qwen3-4B is used in supplementary experiments. Baselines are MeZO, LoZO, AGZO, HiZOO, CurvZO, and ZO-Muon, with zero-shot scores as a non-training reference. We report accuracy for classification tasks and F1 for extractive question answering. Additional metrics and task results appear in Appendix~\ref{app:additional-finetuning}.

\paragraph{Data split.} Following the MeZO sampling protocol~\citep{malladi2023mezo}, training and development subsets are sampled without overlap from each task's official training split. For the six-task main comparisons, the subsets contain 1,000 training and 500 development examples, except WSC, which uses 454/100. The training minibatch size is 16. The additional OPT-2.7B CB and COPA comparisons use 150/100 and 300/100, respectively. Methods compared on a task use the same sampled splits. The OPT-2.7B and OPT-13B comparisons use five evenly spaced development checkpoints; Qwen3-0.6B-Base runs evaluate every 500 training steps. The OPT-13B MeZO, CurvZO, and LoZO SQuAD runs select checkpoints by generated-answer development F1. Other runs follow their recorded evaluation schedules. The selected checkpoint is evaluated on a separate official validation set. In the OPT-2.7B eleven-task comparison, BoolQ, ReCoRD, SQuAD, and DROP use fixed 1,000-example official-validation subsets; the other tasks use their full official validation sets.

\paragraph{Training budget.} Training budgets are matched by perturbed and unperturbed forward evaluations, with approximately 40,000 training forwards per run; validation and checkpoint I/O are excluded from this budget. AIM-ZO uses 15 perturbed evaluations and one unperturbed centre evaluation per update, giving 2,500 updates. MeZO uses two perturbed evaluations per update, giving 20,000 updates. Update counts for the other baselines account for their respective evaluations per update.

\paragraph{Training hardware.} OPT-30B and Qwen3-8B-Base experiments use 140\,GiB NVIDIA H200 GPUs. The other main experiments run on NVIDIA RTX 4090 GPUs with 24 or 48\,GiB of memory. Training ablations run on RTX 4090 GPUs.

\paragraph{Method settings.} AIM-ZO uses BF16 and updates all model parameters. Each trainable layer maintains $K=128$ directions and activates $k=64$ per perturbation: $h=48$ shared directions and 16 directions sampled independently from the maintained tail. The maintained subspace is updated at every training step with Oja step size $\eta_q=0.3$. Each perturbation uses the rank-one construction and layer-wise Frobenius normalization in Section~\ref{sec:queries-update}. The learning rate is constant and weight decay is zero. Over $T=2{,}500$ updates, the perturbation scale follows $\epsilon_t=\epsilon_0[\tfrac14+\tfrac38(1+\cos(\pi t/T))]$, from the initial values in Table~\ref{tab:aimzo-hyperparameters} to $\epsilon_0/4$. Table~\ref{tab:baseline-hyperparameters} summarises the baseline hyperparameter search ranges.

\begin{table}[t]
\centering
\caption{AIM-ZO learning rates and initial perturbation scales.}
\label{tab:aimzo-hyperparameters}
\small
\begin{tabular}{lcc}
\toprule
Model & Learning rate $\eta$ & Initial scale $\epsilon_0$ \\
\midrule
OPT-2.7B & $1.5\times10^{-3}$ & $8\times10^{-5}$ \\
OPT-13B & $1\times10^{-3}$ & $8\times10^{-5}$ \\
OPT-30B & $5\times10^{-4}$ & $6\times10^{-5}$ \\
Qwen3-0.6B-Base & $1\times10^{-3}$ & $8\times10^{-5}$ \\
Qwen3-8B-Base & $1\times10^{-3}$ & $8\times10^{-5}$ \\
\bottomrule
\end{tabular}
\end{table}

\begin{table}[t]
\centering
\caption{Baseline hyperparameter search ranges and calls per update for OPT-13B and Qwen3-0.6B, with OPT-2.7B MeZO also included.}
\label{tab:baseline-hyperparameters}
\small
\begin{tabular}{lccc}
\toprule
Method & Calls/update & Learning rate & Perturbation scale \\
\midrule
MeZO & 2 & $(1\text{--}5)\times10^{-7}$ & $10^{-3}$ \\
CurvZO & 2 & $(1\text{--}8)\times10^{-7}$ & $5\times10^{-4}\text{--}10^{-3}$ \\
AGZO & 3 & $5\times10^{-7}$ & $10^{-4}\text{--}10^{-3}$ \\
HiZOO & 3 & $5\times10^{-7}$ & $5\times10^{-4}\text{--}10^{-3}$ \\
LoZO & 2 & $10^{-7}\text{--}10^{-6}$ & $10^{-3}$ \\
ZO-Muon & 5 & $10^{-2}$ & $10^{-3}$ \\
\bottomrule
\end{tabular}
\end{table}

\paragraph{Reporting.} Tables report mean scores in percent and sample standard deviations when available. Six-task averages are unweighted and are reported only when all six task scores are available. Additional evaluation records are given in Appendix~\ref{app:additional-finetuning}.

\paragraph{Ablation protocol.} Unless stated otherwise, training ablations follow the corresponding main comparison's data split, batch size, forward-evaluation budget, checkpoint-selection rule, and final evaluation set. Changed components and seed counts are specified with each ablation.

\paragraph{Peak-memory measurement.} Figure~\ref{fig:drop-memory} reports peak allocated GPU memory from complete parameter-update steps, varying batch size and padded tensor length on Qwen3-0.6B-Base/DROP. The measurement uses BF16 weights, SDPA attention, and a 24\,GiB RTX 4090, with no gradient checkpointing or key-value cache. Each configuration runs ten updates in a separate process; OOM denotes an observed failure. Further implementation details are in Appendix~\ref{app:runtime-memory}.

\section{Supplementary Experiments for Section~\ref{sec:theory}}
\label{app:structural-diagnostics}
We provide additional measurements for the three analyses in Section~\ref{sec:theory}: gradient capture by the maintained subspace, the relation between maintained and active widths, and one-sided estimation.

\subsection{Gradient structure and maintained-subspace capture}
\label{app:moving-diagnostics}
We evaluate whether gradients concentrate in a shared subspace, whether activations identify that subspace, and how well Oja maintains it during training. The first two measurements use both fixed checkpoints and consecutive training checkpoints; the third compares saved Oja subspaces with current-batch references.

\paragraph{Fixed-checkpoint protocol.} We evaluate RTE checkpoints of Qwen3-4B at steps 500 and 2500, OPT-2.7B at step 2000, and OPT-30B at step 1000. Three disjoint sets contain 128 activation-calibration batches, 16 gradient-fit batches, and 16 gradient-test batches. They construct the top-$K$ activation subspace, fit the shared rank-$r$ right gradient subspace, and evaluate residuals and capture, respectively. Early, middle, and late refer to \texttt{q\_proj} layers 0, 18, and 35 in Qwen3-4B; 0, 16, and 31 in OPT-2.7B; and 0, 24, and 47 in OPT-30B.

The frozen-checkpoint rank scan measures the squared structural residual by layer at $r\in\{1,8,32,64,128\}$. The width scan fixes the fitted shared space at $r=8$ and varies $K\in\{8,16,32,64,128\}$, reporting the arithmetic mean of the three layer-wise squared alignment residuals. Residuals are normalized by each test gradient's squared Frobenius norm before averaging over batches; they are held-out counterparts of the quantities in Assumptions~\ref{ass:theory-structure}--\ref{ass:theory-alignment}. Specifically, let $\widehat U_r$ be the rank-$r$ basis fitted on the gradient-fit batches and $\widehat V_K$ the width-$K$ activation basis fitted on the calibration batches. For the held-out set $\mathcal B_{\rm test}$ of size $B_{\rm test}=16$, we report
\[
e_{\rm sub}^2(r)=\frac1{B_{\rm test}}\sum_{b\in\mathcal B_{\rm test}}
\frac{\|G_b(I-P_{\widehat U_r})\|_F^2}{\|G_b\|_F^2},\qquad
e_{\rm act}^2(r,K)=\frac1{B_{\rm test}}\sum_{b\in\mathcal B_{\rm test}}
\frac{\|G_bP_{\widehat U_r}(I-P_{\widehat V_K})\|_F^2}{\|G_b\|_F^2}.
\]
The hats distinguish fitted subspaces from the theoretical window optimizer $U_t^\star$ and population target $V_t^\star$. The frozen-checkpoint capture comparison evaluates the calibration-based activation target and the checkpoint Oja subspace at $K=128$.

\paragraph{Moving training trajectory.} Figure~\ref{fig:structure-diagnostics} uses 16 consecutive pre-update checkpoints: steps 500--515 for Qwen3-4B and 2000--2015 for OPT-2.7B, on RTE (seed 42). At each distinct parameter value $W_t$, we compute
\[
 \bar G_t=\nabla_W\frac1{1000}\sum_{x\in\mathcal D_{\rm train}}\ell(x;W_t).
\]
BF16 checkpoint weights are promoted to FP32. Autograd gradients are accumulated over four-example microbatches with sample-count weighting, retaining the three selected \texttt{q\_proj} matrices. The first eight gradients fit a shared right space; the last eight evaluate its normalized capture. This temporal split tests transfer across training steps without fitting on the evaluation gradients. Unlike the minibatch gradients in Assumption~\ref{ass:theory-structure}, $\bar G_t$ is the exact mean over the finite training set.

\begin{table}[!htb]
\centering
\caption{Temporal held-out shared-space capture. The rank-$r$ space is fitted on eight steps and evaluated on the next eight.}
\label{tab:moving-structure}
\begin{tabular}{llrrrr}
\toprule
Model & Layer & $r=8$ & $r=32$ & $r=64$ & $r=128$\\
\midrule
Qwen3-4B & Early & .622 & .820 & .879 & .923\\
& Middle & .911 & .964 & .980 & .990\\
& Late & .992 & .994 & .995 & .995\\
OPT-2.7B & Early & .443 & .678 & .784 & .868\\
& Middle & .954 & .980 & .987 & .992\\
& Late & .923 & .966 & .981 & .991\\
\bottomrule
\end{tabular}
\end{table}

Middle and late layers exhibit concentrated shared mean-gradient structure, whereas early layers require a larger rank.

For activation alignment, a disjoint 500-example train-dev split provides activation observations at the first eight checkpoints. The fitted activation target and the rank-8 shared gradient space are evaluated using the last eight full-training-set gradients, with the same normalized residual formula as above. The fitted target provides an empirical window reference for $V_t^\star$.

\begin{table}[!htb]
\centering
\caption{Temporal held-out activation alignment residual $e_{\rm act}^2$ at fixed shared rank $r=8$.}
\label{tab:moving-alignment}
\begin{tabular}{llrrrrr}
\toprule
Model & Layer & $K=8$ & $K=16$ & $K=32$ & $K=64$ & $K=128$\\
\midrule
Qwen3-4B & Early & .2710 & .1242 & .0523 & .0491 & .0462\\
& Middle & .1120 & .0909 & .0636 & .0469 & .0370\\
& Late & .2101 & .1572 & .0636 & .0344 & .0225\\
OPT-2.7B & Early & .2439 & .1550 & .0526 & .0295 & .0209\\
& Middle & .0394 & .0223 & .0142 & .0064 & .0046\\
& Late & .2557 & .2252 & .1571 & .1398 & .1184\\
\bottomrule
\end{tabular}
\end{table}

The alignment residual decreases with $K$, although its magnitude differs substantially by layer; the late OPT layer retains a residual of .1184 at $K=128$.

\FloatBarrier
\paragraph{Saved Oja subspaces and current-batch references.}
\label{app:oja-current-comparison}
The saved Oja subspace is evaluated against $\bar G_t$ at every checkpoint. An exact-SVD top-128 space from the current training batch provides a separate reference (Table~\ref{tab:oja-full-gradient}).

\begin{table}[!htb]
\centering
\caption{Capture of the finite-training-set mean gradient over 16 steps at $K=128$; the current-batch reference uses exact SVD.}
\label{tab:oja-full-gradient}
\begin{tabular}{llrrr}
\toprule
Model & Layer & Oja mean & Oja range & Current SVD mean\\
\midrule
Qwen3-4B & Early & .522 & .476--.628 & .695\\
& Middle & .862 & .791--.951 & .890\\
& Late & .878 & .699--.957 & .926\\
OPT-2.7B & Early & .635 & .587--.675 & .717\\
& Middle & .951 & .932--.966 & .964\\
& Late & .657 & .537--.719 & .770\\
\bottomrule
\end{tabular}
\end{table}

The exact current-batch SVD has higher capture in all six layers.

A second control reconstructs current-batch spaces using the actual training batch and five-step subspace iteration with Rayleigh--Ritz extraction (PI-5). At each checkpoint, both banks are evaluated against FP32 autograd gradients of 16 independent 16-example batches from the disjoint 500-example train-dev split. Four-example microbatch accumulation preserves each batch mean. Each layer has 256 paired batch/checkpoint comparisons. Active width 64 uses the average over 15 selections of the first 48 columns plus 16 random tail columns. The saved Qwen trajectory uses $\eta_q=.3$.

\begin{table}[!htb]
\centering
\caption{Oja minus PI-5 current-batch capture in percentage points (pp), and Oja win rates. Pairs share checkpoints and are not independent training runs.}
\label{tab:oja-current-disjoint}
\begin{tabular}{llrrrr}
\toprule
& & \multicolumn{2}{c}{Wide subspace ($K=128$)}
& \multicolumn{2}{c}{Active space ($k=64$)}\\
Model & Layer & Difference (pp) & Win rate & Difference (pp) & Win rate\\
\midrule
Qwen3-4B & Early & -11.06 & 3.9\% & -10.31 & 11.7\%\\
& Middle & +4.71 & 98.8\% & +4.06 & 94.5\%\\
& Late & +1.02 & 49.6\% & +2.38 & 64.5\%\\
OPT-2.7B & Early & +8.95 & 96.1\% & +6.29 & 83.2\%\\
& Middle & +1.54 & 80.9\% & +2.19 & 83.6\%\\
& Late & +4.82 & 61.3\% & +4.68 & 63.3\%\\
\bottomrule
\end{tabular}
\end{table}

Oja improves mean independent-batch capture in five of the six measured layers, with the early Qwen layer as the exception.

Oja maintenance takes 0.124\,s versus 0.229\,s for current-batch PI basis construction, a 45.9\% reduction; total step times are 0.774\,s and 0.760\,s, respectively.

\FloatBarrier
\paragraph{Stationary Oja replay.}
At frozen checkpoints, Oja is replayed over the calibration activation stream from three random initializations. Table~\ref{tab:stationary-replay} reports final normalized projector error and held-out gradient capture. The preferred step size varies by layer: a larger step can reduce transient error, whereas smaller steps can reduce the observation floor.

\begin{table}[!htb]
\centering
\caption{Stationary replay: (normalized $\Phi$, gradient capture). Qwen and OPT use steps 500 and 2000, respectively.}
\label{tab:stationary-replay}
\begin{tabular}{llrrrr}
\toprule
Model & Layer & $\eta_q=.03$ & $.1$ & $.3$ & $.6$\\
\midrule
Qwen3-4B & Early & (.943,.140) & (.926,.315) & (.892,.414) & (.837,.549)\\
& Middle & (.527,.898) & (.330,.901) & (.338,.894) & (.410,.886)\\
& Late & (.533,.924) & (.618,.923) & (.657,.922) & (.670,.921)\\
OPT-2.7B & Early & (.161,.713) & (.263,.677) & (.349,.646) & (.384,.635)\\
& Middle & (.349,.943) & (.417,.940) & (.559,.935) & (.617,.933)\\
& Late & (.440,.704) & (.572,.662) & (.663,.648) & (.697,.644)\\
\bottomrule
\end{tabular}
\end{table}

\FloatBarrier
\subsection{Active-width diagnostics}
\label{app:active-width-diagnostics}
The following scan compares gradient capture and single-direction alignment across active widths.

\begin{table}[!htb]
\centering
\caption{Full active-space width scan using independent Gaussian coefficient matrices. Each cell gives learned capture / random capture / mean learned single-direction first-order cosine.}
\label{tab:offline-width}
\begin{tabular}{rrr}
\toprule
Width & Qwen3-4B & OPT-2.7B\\
\midrule
8 & .7022 / .0020 / .002272 & .8137 / .0026 / .002855\\
16 & .7758 / .0068 / .001615 & .8458 / .0043 / .002052\\
32 & .8243 / .0121 / .001072 & .8736 / .0082 / .001374\\
64 & .8624 / .0277 / .000829 & .8975 / .0183 / .000982\\
128 & .8866 / .0510 / .000564 & .9166 / .0416 / .000752\\
\bottomrule
\end{tabular}
\end{table}

Capture increases while single-direction estimate alignment decreases as coefficient dimension grows. At active width 64, first-48 plus random-tail-16 capture is .8526 on Qwen and .8928 on OPT, compared with .8624 and .8975 for top-64.

\FloatBarrier
\subsection{One-sided and two-sided estimation}
\label{app:offline-population}
We examine one-sided and two-sided estimates under a fixed objective, a finite-training-set gradient reference, and the BF16 rank-one implementation.

\paragraph{Fixed-objective Gaussian test.} The checkpoint, minibatch, top-64 basis, and three representative \texttt{q\_proj} matrices are fixed. OP uses $N$ positive endpoints with population centering and denominator $N-1$; TP uses $N/2$ pairs, for the same number of perturbed loss evaluations. Estimates are compared with the exact autograd gradient of that minibatch, including energy outside the active subspace. Across five perturbation scales and $N\in\{4,8,16\}$, FP32 OP has lower mean MSE in all 15 settings per model, in both learned and random spaces. At $N=16$ and $\epsilon=8\times10^{-5}$, learned-space MSE reductions are 45.8\% on Qwen and 48.1\% on OPT. Unnormalized Gaussian BF16 tests favor OP in only 3/30 learned-space and 4/30 random-space settings; they are numerically sensitive and differ from the normalized implementation.

\paragraph{Matched perturbations in learned, random, and full spaces.} A separate FP32 control uses full Gaussian $AQ^\top$ for learned and random spaces and full iid Gaussian matrices for full-space perturbations. Directions are layer-Frobenius normalized to $\sqrt{pd}$; structured estimates are divided by the induced covariance factor $d/K$. At batch size 1, $K=64$, $\epsilon=8\times10^{-5}$, and 16 repeated perturbation populations, we obtain Table~\ref{tab:matched-probes}. OP16 uses 16 one-sided directions and TP8 uses eight two-sided directions; both require 16 perturbed loss evaluations.

\begin{table}[!htb]
\centering
\caption{Matched normalized Gaussian control against a fixed batch gradient. MSE is normalized by gradient energy; cosine gap is OP minus TP.}
\label{tab:matched-probes}
\begin{tabular}{llrrrr}
\toprule
Model & Space & OP MSE & TP MSE & Reduction & Cosine gap\\
\midrule
Qwen3-4B & Learned & 50,357 & 94,002 & 46.4\% & $1.192\times10^{-3}$\\
& Random & 1,042 & 1,680 & 38.0\% & $1.911\times10^{-4}$\\
& Dense & 1,825,752 & 3,487,915 & 47.7\% & $1.647\times10^{-4}$\\
OPT-2.7B & Learned & 30,755 & 65,777 & 53.2\% & $1.129\times10^{-3}$\\
& Random & 764 & 1,369 & 44.2\% & $2.530\times10^{-4}$\\
& Dense & 1,115,083 & 1,998,325 & 44.2\% & $2.280\times10^{-4}$\\
\bottomrule
\end{tabular}
\end{table}

OP lowers MSE in all three spaces. The learned-minus-random interaction in the OP--TP cosine gap is $1.001\times10^{-3}$ on Qwen (95\% Monte Carlo interval $[7.64\times10^{-4},1.24\times10^{-3}]$) and $8.756\times10^{-4}$ on OPT ($[3.58\times10^{-4},1.39\times10^{-3}]$). Against full-space perturbations the interactions are $1.027\times10^{-3}$ ($[8.17\times10^{-4},1.24\times10^{-3}]$) and $9.006\times10^{-4}$ ($[4.50\times10^{-4},1.35\times10^{-3}]$).

\FloatBarrier
\paragraph{Finite-training-set gradient.} OP16 and TP8 estimates from 16 mutually disjoint 16-example minibatches are compared with the exact 1000-example mean gradient at a fixed checkpoint. The selected three matrices are evaluated jointly using FP32 full-Gaussian structured or full-space perturbations. This protocol incorporates minibatch noise.

\begin{table}[!htb]
\centering
\caption{Finite-training-set reference at $\epsilon=8\times10^{-5}$, $K=64$, and 16 loss evaluations.}
\label{tab:full-gradient-probes}
\begin{tabular}{llrrrr}
\toprule
Model & Space & OP MSE & TP MSE & Reduction & Cosine gap\\
\midrule
Qwen3-4B & Learned & 1,995,835 & 3,743,201 & 46.7\% & $+7.47\times10^{-5}$\\
& Random & 45,462 & 91,242 & 50.2\% & $+3.06\times10^{-5}$\\
& Dense & 96,707,409 & 192,548,225 & 49.8\% & $-3.72\times10^{-5}$\\
OPT-2.7B & Learned & 1,374,688 & 2,154,535 & 36.2\% & $+8.44\times10^{-5}$\\
& Random & 31,731 & 53,723 & 40.9\% & $+7.29\times10^{-5}$\\
& Dense & 63,415,059 & 116,824,528 & 45.7\% & $-2.92\times10^{-5}$\\
\bottomrule
\end{tabular}
\end{table}

\begin{table}[!htb]
\centering
\caption{Perturbation-scale control against the exact training gradient. Entries are OP MSE reduction (\%) / OP--TP cosine gap in units of $10^{-5}$.}
\label{tab:probe-epsilon}
\begin{tabular}{llrrr}
\toprule
Model & $\epsilon$ & Learned & Random & Dense\\
\midrule
Qwen3-4B & $6\times10^{-5}$ & 46.7 / +7.64 & 50.2 / +2.99 & 49.8 / -3.69\\
& $7.43\times10^{-5}$ & 46.7 / +7.48 & 50.2 / +3.04 & 49.9 / -3.72\\
& $10^{-3}$ & 39.1 / -1.17 & 50.3 / +3.07 & 50.6 / -3.70\\
OPT-2.7B & $2.57\times10^{-5}$ & 36.5 / +7.66 & 41.1 / +7.49 & 45.7 / -2.68\\
& $6\times10^{-5}$ & 36.3 / +8.51 & 41.2 / +7.28 & 45.7 / -2.98\\
& $10^{-3}$ & 33.9 / +2.55 & 40.9 / +7.21 & 45.7 / -2.82\\
\bottomrule
\end{tabular}
\end{table}

At small tested scales, learned-space mean cosine favors OP whereas dense mean cosine favors TP in both models. At $10^{-3}$, the learned-space gain disappears on Qwen and shrinks on OPT. The paired 95\% intervals for learned-space gains include zero with 16 repetitions; OP has lower MSE in every tested space and scale.

\FloatBarrier
\paragraph{Implementation-level direction alignment.} This control compares rank-one $ABQ^\top$, layer-Frobenius-normalized BF16 perturbations at batch size 16 with dense Gaussian perturbations. OP uses 15 positive endpoints plus one center; TP uses eight pairs, each costing 16 loss evaluations. Cosine is measured against the exact gradient of the same fixed minibatch at $\epsilon=8\times10^{-5}$, with 16 paired repetitions.

\begin{table}[!htb]
\centering
\caption{BF16 implementation-level cosine. All numeric entries are in units of $10^{-5}$; intervals are paired 95\% intervals conditional on the checkpoint and batch.}
\label{tab:implementation-probes}
\begin{tabular}{llrrrr}
\toprule
Model & Space & OP & TP & Difference & 95\% interval\\
\midrule
Qwen3-4B & Learned & 38.72 & 32.93 & 5.791 & [1.96,9.63]\\
& Dense & 2.774 & 2.443 & .3307 & [-.514,1.18]\\
OPT-2.7B & Learned & 40.48 & 32.60 & 7.882 & [4.17,11.6]\\
& Dense & 5.482 & 4.246 & 1.236 & [.536,1.94]\\
\bottomrule
\end{tabular}
\end{table}

Learned-space directions have higher cosine alignment, and their mean OP--TP gain is larger in both models.
\FloatBarrier

\newpage
\section{Active-Subspace Ablations}
\label{app:component-ablations}

\subsection{Shared versus independently resampled active subspaces}
\label{app:active-resampling}
We compare using one common active basis for the perturbations within an update with independently resampling the candidate columns for each perturbation. Both variants retain the shared directions and use the same population estimator. Table~\ref{tab:active-resampling} reports development-selected official validation scores on OPT-2.7B, using five matched seeds and 40,000 training forward evaluations. Independent resampling gives higher mean scores on all four tasks in this comparison.

\begin{table}[!htb]
\centering
\caption{Common active basis versus per-perturbation resampling on OPT-2.7B. Classification scores are accuracy; SQuAD uses F1 (\%).}
\label{tab:active-resampling}
\small
\begin{tabular}{lrrrr}
\toprule
Variant & RTE & BoolQ & SST-2 & SQuAD\\
\midrule
MeZO & $65.13\pm1.67$ & $66.10\pm2.26$ & $92.50\pm.54$ & $80.99\pm1.42$\\
Common active basis & $66.14\pm.78$ & $66.84\pm.63$ & $92.48\pm.78$ & $81.06\pm1.02$\\
\method{} & $67.51\pm3.18$ & $67.22\pm1.67$ & $92.87\pm.47$ & $81.19\pm.49$\\
\bottomrule
\end{tabular}
\end{table}

\subsection{Maintenance refresh frequency}
\label{app:maintenance-frequency}
On OPT-2.7B RTE, we compare a fixed activation subspace with Oja updates every ten steps and every step. Table~\ref{tab:maintenance-frequency} reports final development accuracy after 2,500 updates over five seeds; per-step updates have the highest mean.

\begin{table}[!htb]
\centering
\caption{OPT-2.7B RTE maintenance-frequency ablation (five seeds; final development accuracy, \%).}
\label{tab:maintenance-frequency}
\begin{tabular}{lc}
\toprule
Maintenance rule & Accuracy \\
\midrule
Fixed subspace & $68.40\pm2.04$ \\
Oja every ten steps & $68.92\pm3.38$ \\
Oja every step & $70.20\pm1.95$ \\
\bottomrule
\end{tabular}
\end{table}

\subsection{Maintained width and fixed-prefix selection}
\label{app:width-training}
Table~\ref{tab:width-training} compares width-64 and width-128 maintenance with a fixed prefix of 64 active directions. The \method{} column gives the main-result reference at $K=128$, $k=64$, and $h=48$. The fixed-prefix variants use three seeds and 40,000 training forward evaluations; the main-result reference uses five seeds. All scores are official RTE accuracy on 277 examples. The two fixed-prefix widths use separately initialized maintained spaces, so this is a configuration ablation rather than a test of additional columns acting on an identical prefix.

\begin{table}[!htb]
\centering
\caption{RTE accuracy (\%) with fixed-prefix activation and the \method{} main-result reference.}
\label{tab:width-training}
\small
\begin{tabular}{lrrr}
\toprule
Model & $K=64,\ k=h=64$ & $K=128,\ k=h=64$ & \method{}\\
\midrule
OPT-2.7B & $64.26\pm3.82$ & $66.55\pm1.10$ & $\mathbf{67.51}\pm3.18$\\
Qwen3-0.6B & $77.38\pm1.71$ & $77.38\pm.42$ & $\mathbf{77.62}\pm.92$\\
\bottomrule
\end{tabular}
\end{table}

\subsection{Maintained width with sampled tails}
\label{app:sampled-maintained-width}
We vary $K$ while fixing $k=64$ and $h=48$ on OPT-2.7B RTE. All three variants use BF16, three seeds, 40,000 training forward evaluations, and an Oja refresh every ten steps. The $K=128$ row shares the three-seed rank-one setting of Table~\ref{tab:coefficient-rank}. At $K=64$, all 16 tail columns are selected; at larger widths, they are sampled from the maintained tail. Table~\ref{tab:sampled-maintained-width} reports the resulting accuracy.

\begin{table}[!htb]
\centering
\caption{OPT-2.7B RTE maintained-width scan at $k=64$ and $h=48$ (three seeds; official-validation accuracy, \%).}
\label{tab:sampled-maintained-width}
\begin{tabular}{rc}
\toprule
Maintained width $K$ & Accuracy \\
\midrule
64 & $65.70\pm1.25$ \\
128 & $65.46\pm3.62$ \\
256 & $67.15\pm0.72$ \\
\bottomrule
\end{tabular}
\end{table}

\subsection{Shared width}
\label{app:shared-width-training}
At fixed $K=128$ and $k=64$, we vary the number $h$ of shared directions in the same OPT-2.7B RTE protocol. The $h=48$ row reuses the rank-one control above; $h=64$ is the fixed-prefix OPT-2.7B result from Table~\ref{tab:width-training}. Table~\ref{tab:shared-width-training} reports the three-seed RTE comparison.

\begin{table}[!htb]
\centering
\caption{OPT-2.7B RTE shared-width scan at $K=128$ and $k=64$ (three seeds; official-validation accuracy, \%).}
\label{tab:shared-width-training}
\begin{tabular}{rcc}
\toprule
Shared width $h$ & Sampled tail $k-h$ & Accuracy \\
\midrule
0 & 64 & $60.77\pm1.99$ \\
48 & 16 & $65.46\pm3.62$ \\
64 & 0 & $66.55\pm1.10$ \\
\bottomrule
\end{tabular}
\end{table}

\subsection{Population size}
\label{app:population-size-training}
We compare eight and 15 one-sided population members on OPT-2.7B RTE at $K=128$, $k=64$, and $h=48$. The $N=8$ runs use 4,444 updates with nine forward evaluations per update (39,996 in total); the $N=15$ control uses 2,500 updates with 16 evaluations per update (40,000 in total). Both use three seeds and Oja refresh every ten steps. Table~\ref{tab:population-size-training} reports development-selected official-validation accuracy. The larger population has the higher mean under this matched forward-evaluation budget.

\begin{table}[!htb]
\centering
\caption{OPT-2.7B RTE population-size ablation (three seeds; official-validation accuracy, \%).}
\label{tab:population-size-training}
\begin{tabular}{rcc}
\toprule
Population size $N$ & Training forwards & Accuracy \\
\midrule
8 & 39,996 & $61.01\pm3.77$ \\
15 & 40,000 & $65.46\pm3.62$ \\
\bottomrule
\end{tabular}
\end{table}

\subsection{Perturbation coefficient rank}
\label{app:coefficient-rank}
Table~\ref{tab:coefficient-rank} compares rank-one, rank-four, and full Gaussian coefficients in the active subspace on OPT-2.7B RTE. Each variant uses three seeds, BF16, and 40,000 training forward evaluations. Mean accuracies are similar; step times summarize the median non-evaluation time of each training run.

\begin{table}[!htb]
\centering
\caption{Coefficient-rank ablation on OPT-2.7B RTE over three seeds. Accuracy is official-validation accuracy (\%); time is mean per-run median non-evaluation step time; memory is peak CUDA allocated memory.}
\label{tab:coefficient-rank}
\begin{tabular}{lccc}
\toprule
Coefficient form & Accuracy (\%) & Step time (s) & Memory (GiB) \\
\midrule
Rank-one $ab^\top Q^\top$ & $65.46\pm3.62$ & 10.35 & $11.214\pm0.158$ \\
Rank-four $ABQ^\top$ & $64.98\pm2.01$ & 9.77 & $11.216\pm0.160$ \\
Full $RQ^\top$ & $65.58\pm2.21$ & 12.24 & $11.153\pm0.105$ \\
\bottomrule
\end{tabular}
\end{table}
\FloatBarrier

\newpage
\section{Space Quality and One-Sided Estimation}
\label{app:population-quality}
\subsection{Online estimator ablation}
\label{app:online-op-tp}
We first compare OP15-RLOO with raw TP8 in complete OPT-2.7B RTE training runs over five matched seeds. The two variants share the training data, 2,500 updates, Oja maintenance, and perturbation settings. OP15-RLOO uses 15 positive evaluations and one centre evaluation per update; TP8 uses eight positive--negative pairs and one centre evaluation, giving 40,000 and 42,500 loss evaluations in total. Table~\ref{tab:online-op-tp} reports development accuracy. OP15-RLOO has a 1.36-point higher mean at the best-accuracy checkpoint and a 1.84-point higher mean after the final update.

\begin{table}[!htb]
\centering
\caption{Online OPT-2.7B RTE estimator ablation over five matched seeds (development accuracy, \%).}
\label{tab:online-op-tp}
\begin{tabular}{lcc}
\toprule
Estimator & Best checkpoint & Final update \\
\midrule
OP15-RLOO & $71.40\pm3.02$ & $71.12\pm3.29$ \\
Raw TP8 & $70.04\pm4.38$ & $69.28\pm4.62$ \\
\bottomrule
\end{tabular}
\end{table}

\subsection{Space quality at fixed active width}
\label{app:space-quality}
We isolate space quality at fixed active width $k=64$ by rotating a learned orthonormal basis toward an orthogonal random complement:
\[
 Q(s)=\sqrt{s}\,Q_{\rm learned}+\sqrt{1-s}\,Q_\perp,
 \qquad Q_{\rm learned}^{\top}Q_\perp=0.
\]
Each basis has the same dimension. The seven rotation levels are $s\in\{0,.1,.25,.5,.75,.9,1\}$, and capture is measured at each level. All levels use matched full-Gaussian coefficient matrices, FP32 arithmetic, and no layer-wise Frobenius normalization. OP uses 16 population-centered one-sided directions and TP uses eight two-sided pairs, each with 16 perturbed loss evaluations. For $Z_i=A_iQ(s)^\top$, let $y_i^\pm=f(W\pm\epsilon Z_i)$ and $\bar y^+=N^{-1}\sum_{i=1}^N y_i^+$. The compared estimators are
\begin{equation}
\widehat G_{\rm OP}
=\frac{1}{\epsilon(N-1)}\sum_{i=1}^N(y_i^+-\bar y^+)Z_i,
\qquad
\widehat G_{\rm TP}
=\frac1M\sum_{i=1}^M\frac{y_i^+-y_i^-}{2\epsilon}Z_i,
\label{eq:app-quality-estimators}
\end{equation}
where $N=16$ and $M=8$. We measure $\Delta_{\cos}=\cos(\widehat G_{\rm OP},G)-\cos(\widehat G_{\rm TP},G)$.

\paragraph{Exact-linear control.} At eight saved checkpoints per model, the reference is the exact mean gradient over the 1,000-example RTE training set. In this control, the endpoint values are $y_i^\pm=f(W)\pm\epsilon\langle G,Z_i\rangle_F$. Each rotation level uses 64 paired perturbation populations with an exact linear loss oracle. Table~\ref{tab:quality-linear} averages populations within each checkpoint and then averages the checkpoint means. The alignment gain increases with capture at every checkpoint; Pearson correlations over checkpoint-level quality means are .970 for Qwen3-4B and .965 for OPT-2.7B.

\begin{table}[!htb]
\centering
\caption{Fixed-width space-quality control with an exact linear oracle. Alignment gains are reported in units of $10^{-3}$.}
\label{tab:quality-linear}
\begin{tabular}{rrrrr}
\toprule
& \multicolumn{2}{c}{Qwen3-4B} & \multicolumn{2}{c}{OPT-2.7B}\\
$s$ & Capture & $\Delta_{\cos}$ & Capture & $\Delta_{\cos}$\\
\midrule
0 & .0046 & .0795 & .0020 & .0668\\
.10 & .0857 & .365 & .0931 & .448\\
.25 & .2072 & .567 & .2304 & .705\\
.50 & .4094 & .798 & .4598 & .996\\
.75 & .6116 & .975 & .6894 & 1.220\\
.90 & .7329 & 1.067 & .8274 & 1.337\\
1 & .8136 & 1.124 & .9199 & 1.410\\
\bottomrule
\end{tabular}
\end{table}

\paragraph{Real-loss control.} The corresponding FP32 forward evaluations cover three checkpoints per model and four independent one-example batches per checkpoint, with two perturbation scales and 16 paired populations per quality level. The reference is the exact gradient of the fixed batch objective. Across all 48 checkpoint/batch/scale settings, the mean alignment gain increases with capture. All 336 checkpoint/batch/scale/quality mean-MSE comparisons favor OP (Table~\ref{tab:quality-real}). These tests complement the rank-one BF16 implementation diagnostic in Table~\ref{tab:implementation-probes}.

\begin{table}[!htb]
\centering
\caption{Real-loss space-quality sweep. Correlation relates measured capture to the OP--TP alignment gain; monotone counts refer to checkpoint/batch settings.}
\label{tab:quality-real}
\small
\begin{tabular}{llrrr}
\toprule
Model & $\epsilon$ & Correlation & Monotone & OP MSE wins\\
\midrule
Qwen3-4B & $6\times10^{-5}$ & .952 & 12/12 & 84/84\\
& $10^{-3}$ & .952 & 12/12 & 84/84\\
OPT-2.7B & $2.57\times10^{-5}$ & .927 & 12/12 & 84/84\\
& $10^{-3}$ & .926 & 12/12 & 84/84\\
\bottomrule
\end{tabular}
\end{table}
\FloatBarrier

\newpage
\section{Runtime and Peak Memory}
\label{app:runtime-memory}

\subsection{Peak-memory scaling on DROP}
We measure Qwen3-0.6B-Base on DROP using BF16, SDPA attention, \texttt{use\_cache=False}, and no gradient checkpointing on a 24\,GiB RTX 4090. Each of 48 unique method/batch/length configurations runs ten actual parameter updates in a separate process. Peak memory is PyTorch CUDA allocated memory, including model weights and the training update; OOM denotes an observed out-of-memory failure. FO-SGD uses full-parameter SGD without momentum or FP32 master weights. \method{} uses rank-one perturbations, $K=128$, $k=64$, $h=48$, 15 population members, and factor restoration caching.

Figure~\ref{fig:drop-memory} shows that \method{} uses .242--.254\,GiB more memory than MeZO across the measured configurations and less than AGZO and FO-SGD. Inputs are padded to the stated tensor length, so the length sweep measures tensor-shape scaling.

\subsection{Complete runs at a matched evaluation budget}
Table~\ref{tab:runtime-full} reports accumulated training-step time from completed 40,000-evaluation runs, excluding validation and checkpoint I/O. Each row averages three matched seeds. Tasks and execution conditions differ across rows; the ratios are within-row comparisons.

\begin{table}[!htb]
\centering
\caption{Training-step hours at 40,000 forward evaluations. Ratio is \method{} time divided by MeZO time.}
\label{tab:runtime-full}
\begin{tabular}{llrrr}
\toprule
Model & Task & \method{} & MeZO & Ratio\\
\midrule
Qwen3-0.6B & RTE & 4.339 & 1.871 & 2.319\\
OPT-2.7B & RTE & 7.363 & 3.508 & 2.099\\
OPT-13B & BoolQ & 26.603 & 21.919 & 1.214\\
\bottomrule
\end{tabular}
\end{table}

\subsection{Large-model short-run measurements}
Each model and method runs ten BF16 training updates in an independent process, with the first batch matched across methods. Table~\ref{tab:runtime-short} reports mean step time over steps 6--10 and peak allocated memory; time per evaluation divides mean step time by the number of forward evaluations.

\begin{table}[!htb]
\centering
\caption{Large-model short-run time and memory.}
\label{tab:runtime-short}
\small
\begin{tabular}{llrrrr}
\toprule
Model & Method & Evaluations/step & s/step & s/eval. & GiB\\
\midrule
Qwen3-8B & MeZO & 2 & .653 & .327 & 19.929\\
& \method{} & 16 & 7.396 & .462 & 19.937\\
& AGZO & 3 & 1.196 & .399 & 18.772\\
OPT-30B & MeZO & 2 & 1.810 & .905 & 58.552\\
& \method{} & 16 & 14.052 & .878 & 60.034\\
& AGZO & 3 & 2.727 & .909 & 58.087\\
\bottomrule
\end{tabular}
\end{table}
\FloatBarrier

\subsection{Restoration caching}
On Qwen3-0.6B-Base/SQuAD, we compare three \method{} caching variants over ten updates on an RTX 4090 with matched batches. All variants produce identical parameter updates. Factor caching reduces mean step time by 20.6\% for an additional .036\,GiB of allocated memory (Table~\ref{tab:cache-ablation}); full-noise caching yields little further speedup.

\begin{table}[!htb]
\centering
\caption{Cache ablation for \method{}. Time averages all ten steps; allocated and reserved memory are reported separately.}
\label{tab:cache-ablation}
\small
\begin{tabular}{lrrr}
\toprule
Cache & s/step & Allocated GiB & Reserved GiB\\
\midrule
None & 9.164 & 10.251 & 19.111\\
Factor restoration & 7.273 & 10.287 & 19.143\\
Factor restoration + full noise & 7.248 & 10.576 & 20.516\\
\bottomrule
\end{tabular}
\end{table}
\FloatBarrier

\newpage
\section{Additional Fine-Tuning Results}
\label{app:additional-finetuning}
This section provides additional task results and evaluation metrics. Experimental settings are described in Appendix~\ref{app:experimental-setup}.
\paragraph{Main-table evaluation details.} OPT-2.7B CurvZO RTE and OPT-13B \method{} WiC use four seeds, while OPT-2.7B HiZOO WiC uses two. OPT-13B LoZO RTE and BoolQ use seeds 42, 142, and 242. MeZO, CurvZO, and LoZO SQuAD results select checkpoints by generated-answer development F1 and evaluate on an independent 1,000-example official-validation subset. The MeZO SQuAD entry uses full-parameter generation CE. A separate reproduction of MeZO's five-token-prefix F1 setup reaches $75.29\pm2.61$ F1 over three seeds; the main table uses the full-parameter setting.
\subsection{Complete OPT-2.7B task coverage}
\begin{table}[!htb]
\caption{Five-seed official-validation results for OPT-2.7B under 40,000 matched training forwards. Higher is better. W/T/L counts paired-seed outcomes for \textsc{AIM-ZO} relative to MeZO. Scores are percentages.}
  \label{tab:main_results}
  \centering
  \small
  \begin{tabular}{llccc}
    \toprule
    Dataset & Metric & MeZO & \textsc{AIM-ZO} & W/T/L \\
    \midrule
    RTE & Accuracy & $65.13\pm1.67$ & $\mathbf{67.51\pm3.18}$ & 4/1/0 \\
    BoolQ & Accuracy & $66.10\pm2.26$ & $\mathbf{67.22\pm1.67}$ & 5/0/0 \\
    SST-2 & Accuracy & $92.50\pm0.54$ & $\mathbf{92.87\pm0.47}$ & 5/0/0 \\
    WiC & Accuracy & $58.53\pm0.41$ & $\mathbf{60.13\pm1.70}$ & 5/0/0 \\
    MultiRC & Accuracy & $61.96\pm2.04$ & $\mathbf{63.94\pm1.59}$ & 4/0/1 \\
    MultiRC & F1a & $34.71\pm10.81$ & $\mathbf{48.91\pm4.09}$ & 5/0/0 \\
    ReCoRD & F1 & $88.07\pm0.37$ & $\mathbf{88.55\pm0.20}$ & 4/0/1 \\
    SQuAD & F1 & $80.99\pm1.42$ & $\mathbf{81.19\pm0.49}$ & 3/0/2 \\
    DROP & F1 & $24.50\pm0.46$ & $\mathbf{25.78\pm0.45}$ & 5/0/0 \\
    WSC & Accuracy & $54.81\pm1.52$ & $\mathbf{56.73\pm4.35}$ & 4/0/1 \\
    COPA & Accuracy & $\mathbf{80.80\pm2.17}$ & $80.60\pm3.21$ & 2/0/3 \\
    CB & Accuracy & $\mathbf{68.93\pm2.40}$ & $67.86\pm3.57$ & 2/0/3 \\
    CB & Macro-F1 & $\mathbf{53.66\pm8.83}$ & $49.36\pm6.82$ & 1/0/4 \\
    \bottomrule
  \end{tabular}
\end{table}
\FloatBarrier

Using each task's primary metric, \method{} has the higher five-seed mean on 9 of 11 tasks, with 43 wins, 1 tie, and 11 losses across the 55 paired task--seed comparisons. The largest gain is on MultiRC F1a. COPA is approximately tied, while CB favors MeZO under both accuracy and macro-F1.

\paragraph{Additional SQuAD metrics.} Zero-shot EM is 11.00 on OPT-2.7B and 21.90 on OPT-13B. On OPT-13B, MeZO, CurvZO, LoZO, and \method{} obtain EM scores of $44.13\pm1.76$, $51.73\pm1.26$, $34.23\pm6.79$, and $71.07\pm0.29$, respectively. On OPT-30B, MeZO and \method{} obtain $70.10\pm1.36$ and $72.88\pm0.88$ EM over five seeds.

\subsection{Qwen3-0.6B results}
\label{app:qwen06b-results}
Table~\ref{tab:qwen06b-multimethod} reports the six-task comparison on Qwen3-0.6B-Base. The task set replaces WSC with MultiRC. Trained methods use three seeds and approximately 40,000 training forward evaluations; SQuAD is evaluated on a fixed 1,000-example official-validation subset.
\providecommand{\expscore}[2]{$#1_{\scriptstyle\pm#2}$}
\begin{table}[!htb]
\centering
\caption{Multi-method comparison on Qwen3-0.6B-Base (\%). SQuAD uses F1; other tasks use accuracy. Avg. is the unweighted mean over six tasks; subscripts report available sample standard deviations.}
\label{tab:qwen06b-multimethod}
\small
\setlength{\tabcolsep}{2pt}
\begin{tabular*}{\textwidth}{@{\extracolsep{\fill}}lccccccc@{}}
\toprule
Method & RTE & BoolQ & SST-2 & WiC & MultiRC & SQuAD & Avg. \\
\midrule
Zero-shot & 58.48 & 62.20 & 58.26 & 51.25 & 59.08 & 56.22 & 57.58 \\
MeZO & \expscore{77.62}{1.11} & \expscore{73.87}{0.57} & \expscore{88.42}{1.35} & \expscore{55.36}{2.44} & \expscore{72.78}{1.49} & \expscore{83.23}{0.76} & 75.21 \\
CurvZO & \expscore{77.14}{1.04} & \expscore{68.28}{0.80} & \expscore{82.87}{1.15} & \expscore{49.74}{1.37} & \expscore{71.72}{2.86} & \expscore{71.29}{3.46} & 70.17 \\
HiZOO & \expscore{77.86}{0.55} & \expscore{72.76}{0.84} & \expscore{88.80}{0.88} & \expscore{53.92}{1.96} & \expscore{76.11}{0.93} & \expscore{80.42}{0.89} & 74.98 \\
AGZO & \expscore{69.68}{4.39} & \expscore{68.40}{0.26} & \expscore{84.75}{3.04} & \expscore{52.77}{3.70} & \expscore{73.89}{0.38} & \expscore{73.17}{1.43} & 70.44 \\
ZO-Muon & \expscore{75.69}{0.21} & \expscore{72.73}{0.79} & \expscore{88.76}{0.70} & \expscore{57.79}{2.05} & \expscore{75.10}{0.63} & \expscore{81.10}{1.36} & 75.20 \\
LoZO & \expscore{73.53}{0.21} & \expscore{69.88}{1.49} & \expscore{85.67}{0.75} & \expscore{52.93}{0.90} & \expscore{72.59}{0.70} & \expscore{76.74}{1.17} & 71.89 \\
\textsc{AIM-ZO} & \expscore{77.62}{0.92} & \expscore{73.84}{0.56} & \expscore{88.94}{0.70} & \expscore{54.92}{2.11} & \expscore{76.83}{0.54} & \expscore{81.97}{0.57} & 75.69 \\
\bottomrule
\end{tabular*}
\end{table}
\FloatBarrier

For SQuAD, zero-shot EM is 39.00. MeZO, CurvZO, HiZOO, AGZO, ZO-Muon, LoZO, and \method{} obtain EM scores of $74.57\pm1.12$, $60.13\pm4.01$, $71.77\pm1.15$, $61.63\pm1.77$, $71.67\pm1.45$, $66.30\pm1.57$, and $72.87\pm0.59$, respectively.

\subsection{Additional Qwen3-8B metrics}
\label{app:qwen8-results}
Table~\ref{tab:qwen8-extra-metrics} gives Qwen3-8B metrics omitted from the six-task comparison in Table~\ref{tab:scaling_results}. The MeZO and \method{} results use the same three seeds and development-selected checkpoints as the main table. The AGZO main-table row uses three seeds, BF16, and 39,999 training forward evaluations, with checkpoints selected from five development milestones.

\begin{table}[!htb]
\centering
\caption{Additional Qwen3-8B-Base evaluation metrics (\%; three seeds for trained methods).}
\label{tab:qwen8-extra-metrics}
\small
\begin{tabular}{lccc}
\toprule
Metric & Zero-shot & MeZO & \textsc{AIM-ZO} \\
\midrule
MultiRC F1a & 54.70 & $83.95\pm0.25$ & $84.48\pm0.37$ \\
MultiRC question EM & 28.23 & $54.60\pm1.37$ & $56.31\pm0.44$ \\
SQuAD EM & 72.40 & $78.20\pm2.23$ & $84.63\pm0.42$ \\
\bottomrule
\end{tabular}
\end{table}
\FloatBarrier

\newpage
\section{Additional Related Work}
\label{app:additional-related}

Table~\ref{tab:method-properties} summarizes the design differences among representative ZO methods.
\begin{table}[!htb]
\centering
\caption{Design properties of representative ZO methods. Sparse coordinate spaces count as subspace perturbations; guidance and history refer to information used in perturbation construction.}
\label{tab:method-properties}
\scriptsize
\resizebox{\linewidth}{!}{%
\setlength{\tabcolsep}{18pt}
\renewcommand{\arraystretch}{1}
\begin{tabular}{lcccc}
\toprule
Method & \shortstack{Subspace\\perturbations}
& \shortstack{Information-\\guided}
& \shortstack{History-\\informed}
& \shortstack{Width\\decoupling} \\
\midrule
MeZO & $\times$ & $\times$ & $\times$ & $\times$ \\
HiZOO & $\times$ & $\checkmark$ & $\checkmark$ & $\times$ \\
CurvZO & $\checkmark$ & $\checkmark$ & $\checkmark$ & $\times$ \\
LoZO & $\checkmark$ & $\times$ & $\times$ & $\times$ \\
SubZero & $\checkmark$ & $\times$ & $\times$ & $\times$ \\
ZO-Muon & $\checkmark$ & $\times$ & $\times$ & $\times$ \\
AGZO & $\checkmark$ & $\checkmark$ & $\times$ & $\times$ \\
\midrule
\method{} & $\checkmark$ & $\checkmark$ & $\checkmark$ & $\checkmark$ \\
\bottomrule
\end{tabular}}
\end{table}
\FloatBarrier

\paragraph{Fixed activation-informed subspaces.} ZO-Act constructs a fixed low-rank basis from an initial activation snapshot and uses it to restrict perturbations throughout fine-tuning \citep{dong2026zoact}. This avoids repeated basis construction, but does not refresh the activation-derived basis as model parameters change.

\paragraph{Sparse parameter selection.} Sparse MeZO selects parameters according to their magnitudes and sensitivity to ZO estimation noise \citep{liu2025sparsemezo}. Transferable Static Sparsity instead reuses a predetermined parameter mask throughout training \citep{guo2025staticsparsity}. These approaches restrict perturbations in parameter coordinates, whereas a matrix subspace can represent combinations of coordinates through its basis vectors.

\paragraph{Adaptive spaces and temporal gradient structure.} ASEBO estimates an adaptive space from the covariance of full-dimensional evolution-strategy gradient estimates and allocates function evaluations between that space and its orthogonal complement \citep{choromanski2019asebo}. AIM-ZO maintains layer-wise bases from forward activations and samples smaller active subspaces within them. Prior studies have observed that gradient information can concentrate in low-dimensional subspaces that remain stable or evolve gradually over portions of training \citep{gurari2018tiny,jaiswal2025stabilization}. First-order optimisers exploit related temporal structure: GaLore periodically refreshes projection bases from gradient SVDs, whereas Online Subspace Descent and SubTrack++ update or track them from backpropagated gradients \citep{zhao2024galore,liang2024osd,rajabi2025subtrack}.

\paragraph{Streaming subspace estimation.} Streaming PCA methods such as block Oja provide tools for incrementally estimating a leading eigenspace \citep{allenzhu2017oja,huang2021streaming}. Classical guarantees typically assume a stationary covariance; robust streaming PCA also studies Oja updates under temporally drifting covariances \citep{bienstock2022robust}. AIM-ZO uses this maintained activation space to construct ZO perturbations.

\end{document}